\newcommand{\N}{\mathbb{N}}
\newcommand{\R}{\mathbb{R}}
\newcommand{\cE}{\mathcal{E}}
\newcommand{\cO}{\mathcal{O}}
\newcommand{\cT}{\mathcal{T}}
\newcommand{\cU}{\mathcal{U}}
\newcommand{\teps}{\tilde\epsilon}
\newcommand{\tm}{\mathtt{m}}
\newcommand{\ts}{\mathtt{s}}
\newcommand{\abs}[1]{\left\vert #1 \right\vert}
\newcommand{\norm}[1]{\left\Vert #1 \right\Vert}
\newcommand{\OneTo}[2]{#1\in[#2]}
\newcommand{\ball}{\mathtt{B}}
\DeclareMathOperator{\km}{km}
\newcommand{\wmin}{w_{\min}}
\newcommand{\wmax}{w_{\max}}
\newcommand{\one}{\mathbf{1}}
\newcommand{\TnegX}{\Theta_{(K,\epsilon)}(X)}
\newcommand{\TnegS}{\Theta_{(K,\epsilon)}(S)}
\newcommand{\wfuncN}[2]{#1: #2 \rightarrow\N}
\newtheoremstyle{thmstyle}
  {3pt}{3pt}{\itshape}{}{\bfseries}{}{.5em}
  {\thmname{#1}\thmnumber{ #2}\thmnote{ \textmd{(#3)}}}
\theoremstyle{thmstyle}
\newtheorem{definition}{Definition}
\newtheorem{theorem}[definition]{Theorem}
\newtheorem{lemma}[definition]{Lemma}
\newtheorem{corollary}[definition]{Corollary}
\begin{document}

  \title{On Coreset Constructions for the Fuzzy $K$-Means Problem}

\author{Johannes Blömer}
\author{Sascha Brauer}
\author{Kathrin Bujna}
\affil{Department of Computer Science\\Paderborn University\\Paderborn, Germany}
\date{\today}

\maketitle

\begin{abstract}
  The fuzzy $K$-means problem is a popular generalization of the well-known $K$-means problem to soft clusterings.
  We present the first coresets for fuzzy $K$-means with size linear in the dimension, polynomial in the number of clusters, and poly-logarithmic in the number of points.
  We show that these coresets can be employed in the computation of a $(1+\epsilon)$-approximation for fuzzy $K$-means, improving previously presented results.
  We further show that our coresets can be maintained in an insertion-only streaming setting, where data points arrive one-by-one.
\end{abstract}

  \section{Introduction}\label{sec:introduction}

Clustering is a widely used technique in unsupervised machine learning.
The goal is to divide some set of objects into groups, the so-called clusters, such that objects in the same cluster are more similar to each other than to objects in other clusters.
Nowadays, clustering is ubiquitous in many research areas, such as data mining, image and video analysis, information retrieval, and bioinformatics.

The most common approach are hard clusterings, where the input is partitioned into a given number of clusters, i.e.\ each point belongs to exactly one of the clusters.
However, in some applications it is beneficial to be less decisive and allow points to belong to more than one cluster.
This idea leads to so-called \emph{soft clusterings}.
In the following, we study a popular soft clustering problem, the \emph{fuzzy $K$-means} problem.

The fuzzy $K$-means objective function goes back to work by Dunn and Bezdek et al. \cite{dunn73,bezdek84}.
Today, it has found numerous practical applications, for example in data mining \cite{hirota99}, image segmentation \cite{rezaee}, and biological data analysis \cite{dembele}.
Practical applications generally use the fuzzy $K$-means (FM) algorithm, an iterative relocation scheme similar to Lloyd's algorithm \cite{Lloyd1982} for $K$-means, to tackle the problem.
The FM algorithm converges to a local minimum or a saddle point of the objective function \cite{bezdek84,bezdek87}.
Distinguishing whether the FM algorithm has reached a local minimum or a saddle point is a problem which got some attention on its own \cite{kim88,hoppner03}.
Moreover, it is known that the algorithm converges locally, i.e.\ started sufficiently close to a minimizer, the iteration sequence converges to that particular minimizer \cite{hathaway86}.
However, from a theoretician's point of view this algorithm has the major downside that stationary points of the objective function can be arbitrarily worse than an optimal solution \cite{bbb16}.
Currently, the only paper on algorithms with approximation guarantees for the fuzzy $K$-means problem is \cite{bbb16}, where the authors present a PTAS assuming a constant number of clusters.

Clustering is usually applied when huge amounts of data need to be processed.
This has sparked significant interest in researching clustering in a streaming model, where the data does not fit into memory.
A lot of research has been done on this setting for $K$-means.
In a single pass setting, where we are only allowed to read the data set once, the $K$-means objective function can be approximated up to a constant factor, by choosing $\cO(K\log(K))$ means, instead of $K$ \cite{ailon09}.
This was improved to an algorithm computing exactly $K$ means but still maintaining a constant factor approximation \cite{braverman11, shindler11}.
There, the authors considered a setting where points arrive one-by-one and they can use $\cO(K\log(N))$ memory, where $N$ is the total number of points.

The goal of a coreset is to find a small representation of a large data set, retaining the characteristics of the original data.
Coresets have emerged as a key technique to tackle the streaming model.
The idea of maintaining a coreset of the stream is that if, after having read the whole stream, the maintained coreset is small enough to fit into memory, then standard algorithms can be used to solve the problem almost optimally for the points in the stream.
The first coreset construction for $K$-means is due to Har-Peled and Mazumdar, and is of size $\cO(\log(N))$ \cite{harpeled03}.
They also showed how to maintain a coreset, with size poly-logarithmic in $N$, of a data stream, by combining their notion of a coreset with the merge-and-reduce technique by Bentley and Saxe \cite{bentley80}.
They improved their coreset construction to have size independent of $N$ \cite{harpeled05}.
Feldman and Langberg presented a general framework computing coresets for a large class of hard clustering problems with size independent of $N$ \cite{Feldman11a}.
Later, Feldman et al. presented coresets with size independent of $N$ and $D$ by using a construction based on low-rank approximation \cite{Feldman13}.
Furthermore, they generalize Har-Peled and Mazumdar's application of the merge-and-reduce technique, showing how coresets with certain properties can be maintained in a streaming setting.
The results of our paper are based on Chen's sampling based construction, which yields coresets with size poly-logarithmic in $N$, $K$, and $D$ \cite{Chen09}.
Applying the merge-and-reduce technique, Chen's coresets can also be used to maintain a poly-logarithmic sized coreset of a data stream.

There has been some work on applying the fuzzy $K$-means (FM) algorithm to large data sets.
Hore et al. \cite{hore07} presented a single pass variant of the algorithm, which processes the data chunk-wise.
This idea was refined and extended to a single pass and online kernel FM algorithm \cite{havens12}.
However, these are still variants of the FM algorithm, hence provide no guarantees for the quality of solutions.
So far, no coreset constructions have been presented for the fuzzy $K$-means problem, and the literature is not rich on coreset constructions for soft clustering problems, in general.
There is a construction for the problem of estimating mixtures of semi-spherical Gaussians which yields coresets with size independent of $N$ \cite{Feldman11}.
This result was generalized to a large class of hard and soft clustering problems that are based on $\mu$-similar Bregman divergences \cite{Lucic16}.

\subsection{Our Result}

We prove the existence of small coresets for the fuzzy $K$-means problem.
In \Cref{sec:coresets}, we prove that, by adjusting some parameters of Chen's construction \cite{Chen09}, we obtain a coreset for the fuzzy $K$-means problem with size still poly-logarithmic in $N$.
Our proof technique is a non-trivial combination of the notion of negligible fuzzy clusters \cite{bbb16} and weak coresets \cite{Feldman07}.
This results in a general weak-to-strong lemma (cf. \Cref{lem:weaktostrong}), which states that weak coresets for the fuzzy $K$-means problem fulfilling certain conditions are already strong coresets.
Afterwards, we prove that our adaptation of Chen's algorithm yields a weak coreset satisfying all conditions of the weak-to-strong theorem.
In \Cref{sec:appl} we substantiate the usefulness of our result by presenting two applications of coresets for fuzzy $K$-means.
First, we improve the analysis of a previously presented \cite{bbb16} PTAS for fuzzy $K$-means, removing the dependency on the weights of the data points from the runtime.
Running this algorithm on our coreset instead of the original input improves upon the runtime of previously known $(1+\epsilon)$-approximation schemes.
The improvement lies in the exponential term, which we reduce from $N^{\cO(\mathrm{poly}(K,1/\epsilon))}$ to $\log(N)^{\cO(\mathrm{poly}(K,1/\epsilon))}$, while maintaining non-exponential dependence on $D$.
Second, we argue that an application of the merge-and-reduce technique enables us to maintain a fuzzy $K$-means coreset in a streaming model, where point arrive one-by-one.

  \section{Preliminaries}

Let $X\subset\R^D$ be a set of points in $D$-dimensional space and $\wfuncN{w}{X}$ be an integer weight function on the points.
Using integer weights eases the notation of our exposition.
We later argue how our results generalize to rational weights.
Unweighted data sets are denoted by using the weight function $\one$ mapping every input to $1$.
We call $w(X) = \sum_{x\in X} w(x)$ the total weight of $X$ and denote the maximum and minimum weights by $\wmax(X) = \max_{x\in X} w(x)$ and $\wmin(X) = \min_{x\in X} w(x)$.

\begin{definition}[Fuzzy $K$-means]\label{def:fuzzy}
  Let $m\in\R_{>1}$ and $K\in\N$.
  The \emph{fuzzy $K$-means problem} is to find a set of means $M = \{\mu_k\}_{\OneTo{k}{K}}\subset \R^D$ and a membership function $r: X\times [K] \rightarrow [0,1]$ minimizing
  \begin{align*}
    \phi(X,w,M,r) = &\sum_{x\in X} w(x)\sum_{\OneTo{k}{K}} r(x,k)^m \norm{x-\mu_k}^2  \\
  	\text{subject to } & \\
    \forall x\in X: &\sum_{\OneTo{k}{K}}r(x,k) = 1 \ .
  \end{align*}
\end{definition}

The parameter $m$ is called fuzzifier.
It determines the softness of an optimal clustering and is not subject to optimization, since the cost of any solution can always be decreased by increasing $m$.
In the case $m=1$, the cost can not be decreased by assigning membership of a point to any mean except its closest.
Consequently, optimal solutions of the fuzzy $K$-means problem for $m=1$ coincide with optimal solutions for the $K$-means problem on the same instance.
Hence, in the following we always assume $m$ to be some constant larger than $1$.

Similar to the classical $K$-means problem, it is easy to optimize means or memberships of fuzzy $K$-means, assuming the other part of the solution is fixed \cite{bezdek84}.
This means, given some set of means $M$ we call a respective optimal membership function $r_M^*$ induced by $M$ and set $\phi(X,w,M) := \phi(X,w,M,r_M^*)$.
Analogously, given some membership function $r$ we call a respective optimal set of means $M_r^*$ induced by $r$ and set $\phi(X,w,r) := \phi(X,w,M_r^*,r)$.
Finally, given some optimal solution $M^*,r^*$ we denote $\phi^{opt}(X,w) := \phi(X,w,M^*,r^*)$.

\subsection{Fuzzy Clusters}

Recall, that in a soft-clustering there is no partitioning of the input points.
Instead, we describe the $k^{th}$ cluster of a fuzzy clustering as a vector of the fractions of points assigned to it by the membership function.
We denote the size (or the total weight) of the $k^{th}$ cluster by $r(X,w,k) = \sum_{x\in X}w(x)r(x,k)^m$.
Given a set of means $M$, we denote the cost of the $k^{th}$ cluster by $\phi_k(X,w,M,r) = \sum_{x\in X} w(x) r(x,k)^m \norm{x-\mu_k}^2$.

\subsection{\texorpdfstring{$K$}{K}-Means Notation}

We denote the distance of a point to a set of means $M$ by $d(x,M) = \min_{\mu\in M}\{\norm{x-\mu}\}$ and the $K$-means cost by $\km(X,w,M) = \sum_{x\in X} w(x)d(x,M)^2$.
Let $C\subseteq X$ be some cluster, then $\km(C,w) = \sum_{x\in C} w(x) \norm{x-\mu_w(C)}^2$, where $\mu_w(C) = \sum_{x\in C}w(x)x/w(C)$.

  \section{Coresets for Fuzzy \texorpdfstring{$K$}{K}-Means}\label{sec:coresets}

A coreset is a representation of a data set that preserves properties of the original data set \cite{harpeled03}.
Formally, we require the cost of a set of means with respect to the coreset to be close to the cost the same set of means incurs on the original data.

\begin{definition}[Coreset]\label{def:coreset}
  Let $\epsilon\in(0,1)$.
  A set $S\subset\R^D$ together with a weight function $\wfuncN{w_S}{S}$ is called an $\epsilon$-coreset of $(X,w)$ for the fuzzy $K$-means problem if
  \begin{align}
    \forall M\subset\R^D, \abs{M}\leq K :\  \phi(S,w_s,M) \in [1\pm\epsilon] \phi(X,w,M) \ ,\label{eq:def:coreset}
  \end{align}
  We sometimes refer to a coreset as a \emph{strong coreset}.
\end{definition}

In the following, we show how to construct coresets for the fuzzy $K$-means problem with high probability.
To this end, our proof consists of two independent steps.
First, we show that it is sufficient to construct a so-called weak coreset \cite{Feldman07} for the fuzzy $K$-means problem fulfilling certain properties.
Second, we present an adaptation of Chen's coreset construction for $K$-means \cite{Chen09} which computes weaks coresets with the desired properties, with high probability.

\begin{theorem}\label{thm:coreset}
  There is an algorithm that, given a set $X\subset \R^D$, $K\in\N$, $\delta\in(0,1)$, and $\epsilon\in(0,1)$, computes an $\epsilon$-coreset $(S,w_s)$, with $S\subseteq X$ and $\wfuncN{w_S}{S}$, of $(X,\one)$ for the fuzzy $K$-means problem, with probability at least $1-\delta$, such that
    \[ \abs{S} \in \cO\left( \log(N)\log(\log(N))^2  \epsilon^{-3}  D K^{4m-1} \log(\delta^{-1})\right) \ . \]
  The algorithms' runtime is $\cO(NDK\log(\delta^{-1})+\abs{S})$.
\end{theorem}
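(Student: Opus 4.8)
The plan is to follow the two-step strategy announced above: first establish that it suffices to build a \emph{weak} coreset for fuzzy $K$-means satisfying the hypotheses of \Cref{lem:weaktostrong}, and then exhibit an adaptation of Chen's sampling scheme \cite{Chen09} that produces such a weak coreset with probability at least $1-\delta$. The technical engine throughout is the closed form of the per-point fuzzy cost: for a fixed candidate $M$ the induced optimal membership gives
\[ \phi(X,\one,M) = \sum_{x\in X} f_M(x), \qquad f_M(x) = \left(\sum_{\OneTo{k}{K}} \norm{x-\mu_k}^{-2/(m-1)}\right)^{-(m-1)}, \]
and a direct estimate of this expression yields the sandwich bound $K^{-(m-1)} d(x,M)^2 \le f_M(x) \le d(x,M)^2$, which couples the fuzzy objective to the ordinary $K$-means geometry and lets Chen's ring decomposition be reused.

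First I would run a fast constant-factor bicriteria $K$-means approximation to obtain a set $A$ of $\cO(K)$ centers in time $\cO(NDK\log(\delta^{-1}))$, and partition $X$ into rings: for each $a\in A$ collect the points whose nearest center is $a$ and bucket them by geometrically growing distance bands. After rescaling so that the smallest nonzero distance is $1$, the number of occupied bands per center is $\cO(\log N)$. From each ring I would draw a uniform sample of a fixed size $t$ and weight each sampled point by (ring cardinality)$/t$, so that the resulting estimator of $\sum_{x} f_M(x)$ is unbiased ring-by-ring. This is exactly where the construction keeps $S\subseteq X$ and where the runtime $\cO(NDK\log(\delta^{-1}) + \abs{S})$ originates.

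The analysis fixes a candidate solution $M$ and controls the per-ring error by concentration. The sandwich bound implies that within a single ring the values $f_M(x)$ vary by at most a bounded multiplicative factor (the band width times the $K^{m-1}$ gap), so a Bernstein/Hoeffding argument shows that with $t$ of the stated order each ring's contribution concentrates within a $(1\pm\epsilon)$ relative factor of its true value; summing over the $\cO(\log N)$ bands and $\cO(K)$ centers gives the weak-coreset guarantee for that single $M$. To pass from one $M$ to all admissible $M$ I would discretize the solution space with an $\epsilon$-net, using the \emph{negligible fuzzy cluster} notion of \cite{bbb16} to discard means that carry vanishing total membership $r(X,\one,k)$; this keeps the net finite, with a cardinality whose logarithm contributes the factor $D$, the additional $\log\log N$ terms, and the polynomial-in-$K$ blow-up. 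A union bound over the net then upgrades the single-$M$ estimate to a weak coreset, and \Cref{lem:weaktostrong} converts it into the strong $\epsilon$-coreset of \Cref{def:coreset}; the extra $\epsilon^{-1}$ from this conversion, together with the squared $K^{m-1}$ sandwich factor, is what produces the $\epsilon^{-3}$ and $K^{4m-1}$ in the size bound.

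The main obstacle is variance control in the presence of the fuzzifier: unlike $K$-means, where a point's cost depends only on its nearest center, $f_M(x)$ couples all $K$ means through the membership denominator, so bounding the intra-ring fluctuation of $f_M(x)$ uniformly over all $M$ is delicate, and this is precisely where the $K^{m-1}$ loss seems unavoidable with this approach. The second subtlety is verifying that the sampled set genuinely meets every hypothesis of \Cref{lem:weaktostrong} --- in particular that the negligibility argument lets us restrict the $\epsilon$-net to solutions with no vanishing clusters without affecting the objective --- since this is exactly what legitimizes the weak-to-strong passage and ultimately justifies the relative-error guarantee of \Cref{def:coreset}.
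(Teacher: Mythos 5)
Your proposal follows the same overall architecture as the paper --- the weak-to-strong reduction via negligible clusters, Chen's bicriteria-plus-rings decomposition with per-ring uniform sampling, the sandwich bound $\km(X,\one,M)\leq K^{m-1}\phi(X,\one,M)$, and a discretization of the solution space with a union bound --- but two of your steps fail as stated. The first is the claim that within a single ring the values $f_M(x)$ vary by a bounded multiplicative factor, so that each ring's contribution concentrates to within a $(1\pm\epsilon)$ \emph{relative} factor. A ring $X_{k,j}$ only constrains distances to its own center $a_k$; a mean $\mu\in M$ may lie inside the ring, in which case some points of the ring have $f_M(x)$ arbitrarily close to $0$ while others do not, so the multiplicative spread of $f_M$ within a ring is unbounded and no fixed sample size can give per-ring relative error. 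What is actually provable (and what the paper does via Haussler's bound) is an \emph{additive} per-ring error of order $\epsilon'\left(\abs{X_{k,j}}\,d(\tm_{k,j},M)^2+\abs{X_{k,j}}\,2^{2j+1}R^2\right)$; only after summing over all rings and invoking the bicriteria guarantee $\km(X,\one,A)\leq\alpha\,\km(X,\one,M)$ together with the sandwich bound does this turn into a relative error $\epsilon\,\phi(X,\one,M)$.

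The second, more fundamental gap is your claim that negligibility keeps the net finite by letting you discard means with vanishing total membership. The weak coreset must certify the estimate for \emph{every} $M\in\TnegX$, i.e., every solution inducing no negligible cluster, and such solutions can still contain means arbitrarily far from $X$: place all $K$ means together far away, and every point has membership $1/K$ to each of them, so no cluster is negligible, yet no finite net over a bounded region covers these solutions, and by assumption these means cannot be discarded. The paper resolves this with a case analysis your proposal lacks: it fixes a bounded search space $\cU$ around the bicriteria centers (\Cref{lem:u-coverage}), builds the grid only inside $\cU$, and handles solutions with a mean outside $\cU$ separately (\Cref{lem:outside-search-space}) by arguing that non-negligibility forces some point to have membership at least $\epsilon/(4mK^2)$ to the far mean, whence $\phi(X,\one,M)\geq \mathbf{b'}\teps^{-2}\km(X,\one,A)$, while the sampling error --- bounded through the transport map $\ts$ and \Cref{lem:cost-comp} --- is only $\cO\left(\km(X,\one,A)/\teps\right)$, hence a vanishing relative fraction of that cost. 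Without this case, the union bound over your net certifies nothing about unbounded solutions and \Cref{lem:weaktostrong} cannot be invoked. (A minor bookkeeping correction: the $\epsilon^{-3}$ in the size bound does not come from the weak-to-strong conversion, which only costs the constant factor $3$; it comes from $\teps^{-2}\in\cO(K^{2m-2}\epsilon^{-2})$ multiplied by the $K\log(\gamma)\supseteq D\log(K/\epsilon)$ term inside the logarithm of the sample size.)
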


This result trivially generalizes to integer weighted data sets, by treating each point $x\in X$ as $w(x)$ copies of the same point.
However, in that case we have to replace each occurrence on $N$ in the runtime of the algorithm and the size of the coreset by $w(X)$.
For rational weights, we normalize the weight function.
This incurs an additional multiplicative factor of $\wmax(X)/\wmin(X)$ to each occurrence of $N$.

\subsection{From Weak to Strong Coresets}\label{subsec:weaktostrong}

Weak coresets are a relaxation of the previously introduced (strong) coresets.
Consider a set of points together with a weight function and a set of solutions.
This forms a weak coreset if the set of solutions contains a solution close to the optimum and the coreset property \eqref{eq:def:coreset} is satisfied for all solutions from the solution set.

\begin{definition}[Weak Coresets]\label{def:weakcoreset}
 A set $S\subset\R^D$ together with a weight function $\wfuncN{w_S}{S}$ and a set of solutions $\Theta \subseteq \{\theta \mid \theta\subset\R^D, \abs{\theta} \leq K\}$ is called a weak $\epsilon$-coreset of $(X,w)$ for the fuzzy $K$-means problem if
 \begin{align*}
 	&\exists M\in\Theta:\ \phi(S,w_s,M) \leq (1+\epsilon)\cdot \phi^{opt}(X,w) \text{ and} \\
 	&\forall M\in\Theta:\ \phi(S,w_s,M) \in [1\pm\epsilon] \phi(X,w,M) \ .
 \end{align*}
\end{definition}

In contrast to the definition of weak coresets for the $K$-means problem \cite{Feldman07}, we consider elements $M$ of a given set of solutions $\Theta$ instead of subsets of a set of candidate means.
This is just a slight generalization which allows us to characterize solutions more precisely.

One difficulty when analysing the fuzzy $K$-means objective function is that, in optimal solutions, clusters are never empty.
Consider a set of means, where there exists a mean which is far away from every point.
In an optimal hard clustering, this mean's cluster is empty and we can safely ignore it in the analysis.
For fuzzy $K$-means, this is not the case.
In an optimal solution, every point has a non-trivial membership to this mean, thus it cannot be ignored (or removed from the solution) without increasing the cost.
Bounding the cost of means with small membership mass proves to be rather difficult.
A central concept we use to control the cost of such means are fuzzy clusters which are almost empty, or negligible.

\begin{definition}[negligible]\label{def:negligible}
	Let $M\subset \R^D$ with $\abs{M} \leq K$.
	We say the $k^{th}$ cluster of a membership function $r: X\times [\abs{M}]\rightarrow [0,1]$ is $(K,\epsilon)$-negligible if $\forall x\in X:\ r(x,k) \leq \epsilon/(4mK^2)$.
	In the following, we omit the parameters $(K,\epsilon)$ if they are clear from context.
\end{definition}

We cannot preclude the possibility that an optimal fuzzy $K$-means clustering contains a negligible cluster.
However, we can circumvent negligible clusters altogether, by observing that we can remove a mean inducing a negligible cluster without increasing the cost significantly.

\begin{theorem}[\cite{bbb16}]\label{thm:remove-clusters}
	Let $M\subset \R^D$ with $\abs{M} \leq K$ and $\epsilon\in(0,1)$.
	There exists a set of means $M'\subseteq M$ with $\phi(X,w,M') \leq (1+\epsilon) \phi(X,w,M)$,
	such that the optimal membership function with respect to $M'$ contain no negligible clusters.
\end{theorem}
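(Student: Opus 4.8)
The plan is to prove the theorem constructively, by repeatedly deleting a single mean that induces a negligible cluster until none remain. The heart of the argument is a one-step bound: if the $k$-th cluster of the optimal membership $r = r_M^*$ is negligible, then $\phi(X,w,M\setminus\{\mu_k\}) \leq (1+\tfrac{\epsilon}{K^2})\,\phi(X,w,M)$. I would obtain this by exhibiting an explicit (generally suboptimal) membership for the reduced set $M' = M\setminus\{\mu_k\}$ and invoking that $\phi(X,w,M')$ is itself a minimum over memberships, hence upper-bounded by the cost of any feasible membership.

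Concretely, I would set $r'(x,j) = r(x,j)/(1-r(x,k))$ for $j\neq k$, which is well-defined because negligibility gives $r(x,k) \leq \epsilon/(4mK^2) < 1$ for every $x$, and which obviously sums to one over the surviving means. Since $r'(x,j)^m = r(x,j)^m(1-r(x,k))^{-m}$, the elementary estimate $(1-t)^{-m} \leq 1 + \tfrac{\epsilon}{K^2}$ for $t \leq \epsilon/(4mK^2)$, combined with the fact that reinstating the deleted cluster's term only increases the sum, gives
\[
  \phi(X,w,M') \leq \sum_{x\in X} w(x)\sum_{j\neq k} r'(x,j)^m \norm{x-\mu_j}^2 \leq \left(1+\tfrac{\epsilon}{K^2}\right)\phi(X,w,M).
\]
This is precisely where the negligibility threshold $\epsilon/(4mK^2)$ earns its constants.

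To iterate safely I would rely on a monotonicity observation about the closed-form optimal memberships $r_M^*(x,j) \propto \norm{x-\mu_j}^{-2/(m-1)}$: deleting a mean removes a positive term from the normalizing denominator, so $r_{M'}^*(x,j) \geq r_M^*(x,j)$ for every remaining $j$ and every $x$. Hence a cluster that is non-negligible with respect to $M$ stays non-negligible with respect to $M'$, no new negligible clusters are ever created, and the procedure terminates. Moreover $\sum_j r(x,j) = 1$ forces $\max_j r(x,j) \geq 1/K > \epsilon/(4mK^2)$ at each point, so at least one cluster is always non-negligible and the process never deletes every mean. At most $K$ deletions occur, and multiplying the per-step factors keeps the total blow-up $(1+\tfrac{\epsilon}{K^2})^K$ below $1+\epsilon$, which is exactly what the threshold is engineered to guarantee.

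The main obstacle, and the only place needing genuine care, is the single-step cost bound: one must confirm that redistributing the small membership mass of the deleted cluster onto the survivors does not inflate their cost, which the renormalization-plus-upper-bound trick settles cleanly, and that the chosen threshold makes the geometric accumulation of errors telescope below $1+\epsilon$. A secondary technical point is handling points that coincide with a mean, where the closed form degenerates; negligibility rules out deleting any mean on which some point sits, since such a mean carries membership $1$ at that point and is therefore non-negligible, so the degenerate case never interferes with a deletion step.
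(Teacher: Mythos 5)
You should flag one thing up front: this paper never proves the statement --- it is imported verbatim from \cite{bbb16} --- so there is no in-paper proof to compare against, and your argument has to stand on its own. It does: it is correct, and its core device (pass to the renormalized membership $r'(x,j)=r(x,j)/(1-r(x,k))$, which is feasible for $M\setminus\{\mu_k\}$, then use that $\phi(X,w,M')$ is a minimum over feasible memberships) is exactly the right one. The constants work out: for $t\le\epsilon/(4mK^2)$ one has $-m\ln(1-t)\le mt/(1-t)\le\tfrac{4}{3}\cdot\tfrac{\epsilon}{4K^2}$, so the per-deletion factor is at most $e^{\epsilon/(3K^2)}\le 1+\epsilon/K^2$; there are at most $K-1$ deletions, since your observation $\max_j r(x,j)\ge 1/K>\epsilon/(4mK^2)$ guarantees some cluster always survives; and $(1+\epsilon/K^2)^{K-1}\le e^{\epsilon/K}\le 1+\epsilon$ for $\epsilon\in(0,1)$. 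The monotonicity of the closed-form memberships under deletion of a mean is also correct, and is precisely the ``transitivity'' of negligibility that the paper itself invokes informally before \Cref{lem:weaktostrong}.

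Two refinements. First, your iteration does not actually need the monotonicity observation: since every round re-tests negligibility against the \emph{current} optimal membership, the stopping condition together with the bound of $K-1$ deletions already yields the theorem. Monotonicity becomes indispensable only in the cleaner one-shot assembly, which you may prefer: delete \emph{all} means inducing negligible clusters simultaneously, note that the discarded membership mass at any point is at most $(K-1)\epsilon/(4mK^2)\le\epsilon/(4mK)$, so a single renormalization factor $(1-\epsilon/(4mK))^{-m}\le 1+\epsilon$ suffices, and then use monotonicity to certify that every surviving cluster is non-negligible with respect to the \emph{new} optimal membership. Second, your degenerate-case remark is imprecise rather than wrong: with coincident means, a mean on which a point sits need not carry membership $1$ there --- under an equal-split tie-break it carries at least $1/K$, which still exceeds the threshold, while under a winner-take-all tie-break it can carry membership $0$ and be deletable --- but in either convention your renormalization (well-defined since $r(x,k)<1$) and the monotonicity argument go through, so this is a blemish in the justification, not a gap in the proof.
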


Given some set of means, the optimal memberships of a point depend only on the location of the point relative to the means and not on its weight or any other points in the data set \cite{bezdek84}.
This means that negligible clusters are, in some sense, transitive.
That is: If a cluster induced by some set of means is negligible, then it is also negligible with respect to any subset of $X$ and the same set of means.
Using this observation we can prove our key weak-to-strong result.

\begin{lemma}[weak-to-strong]\label{lem:weaktostrong}
	 Let $\epsilon\in(0,1)$ and
   $\TnegX$ be the set of all sets of at most $K$ means inducing no negligible cluster with respect to $X$.
	 If $S\subseteq X$ and $\wfuncN{w_S}{S}$, such that $(S,w_s,\TnegX)$ is weak $\epsilon$-coreset of $(X,w)$ for the fuzzy $K$-means problem, then $(S,w_S)$ is a strong $(3\epsilon)$-coreset of $(X,w)$ for the fuzzy $K$-means problem.
\end{lemma}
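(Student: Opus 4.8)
The plan is to verify the strong $(3\epsilon)$-coreset inequality $\phi(S,w_S,M)\in[1\pm 3\epsilon]\phi(X,w,M)$ for \emph{every} $M\subset\R^D$ with $\abs{M}\le K$, upgrading the weak guarantee, which a priori only controls means lying in $\TnegX$. The only difficulty is a candidate $M$ that induces a negligible cluster with respect to $X$ (or to $S$), since for such $M$ the weak-coreset inequality is not assumed. I would treat the upper and lower bounds separately, applying \Cref{thm:remove-clusters} on the two different ground sets $X$ and $S$, and connecting the two sides through the transitivity of negligibility. Since neither application assumes $M$ is already negligibility-free, the argument is uniform over all admissible $M$ and needs no explicit case distinction.

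For the upper bound I would apply \Cref{thm:remove-clusters} to $(X,w)$ to obtain $M'\subseteq M$ with $M'\in\TnegX$ and $\phi(X,w,M')\le(1+\epsilon)\phi(X,w,M)$. As $M'\subseteq M$, putting the removed means back can only lower the cost on $S$, so $\phi(S,w_S,M)\le\phi(S,w_S,M')$; the weak-coreset property applies to $M'\in\TnegX$ and gives $\phi(S,w_S,M')\le(1+\epsilon)\phi(X,w,M')$. Chaining these yields $\phi(S,w_S,M)\le(1+\epsilon)^2\phi(X,w,M)\le(1+3\epsilon)\phi(X,w,M)$, using $\epsilon^2\le\epsilon$ on $(0,1)$.

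For the lower bound I would instead apply \Cref{thm:remove-clusters} to the weighted coreset $(S,w_S)$, obtaining $M''\subseteq M$ with $M''\in\TnegS$ and $\phi(S,w_S,M'')\le(1+\epsilon)\phi(S,w_S,M)$. The key step is that negligibility is inherited downward: since the optimal memberships depend only on a point's position relative to $M''$, a cluster that is non-negligible at some point of $S\subseteq X$ is non-negligible at that same point of $X$, whence $\TnegS\subseteq\TnegX$ and in particular $M''\in\TnegX$. Thus the weak-coreset inequality applies to $M''$ as well and gives $\phi(X,w,M'')\le\phi(S,w_S,M'')/(1-\epsilon)$. Combining with $\phi(X,w,M)\le\phi(X,w,M'')$ (again $M''\subseteq M$) and the reduction bound on $S$ yields $\phi(X,w,M)\le\frac{1+\epsilon}{1-\epsilon}\phi(S,w_S,M)$, i.e. $\phi(S,w_S,M)\ge\frac{1-\epsilon}{1+\epsilon}\phi(X,w,M)\ge(1-3\epsilon)\phi(X,w,M)$.

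I expect the main obstacle to be the lower bound, and specifically getting the direction of the cluster removal right: one must reduce clusters on the coreset side $S$ rather than on $X$, and then rely on the inclusion $\TnegS\subseteq\TnegX$ so that the resulting means $M''$ still fall inside the solution set on which the weak coreset is guaranteed. Making this transitivity observation precise, namely that negligibility is preserved under passing to subsets and hence the solution sets satisfy the reverse inclusion, is the crux; once it is in place, both bounds follow from monotonicity of the cost under adding means together with routine manipulation of the $(1\pm\epsilon)$ factors.
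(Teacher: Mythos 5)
Your proof is correct, and its skeleton matches the paper's: remove negligible clusters via \Cref{thm:remove-clusters} on $X$ for the upper bound and on $S$ for the lower bound, and transfer negligibility between $X$ and $S$ using the fact that optimal memberships are pointwise. The upper bound is identical to the paper's. The lower bound, however, is deployed differently, and your version is arguably cleaner. The paper takes $M'_S\in\TnegS$ from \Cref{thm:remove-clusters} applied to $(S,w_S)$ but then applies the weak-coreset inequality to $M'$ (the set obtained on the $X$ side), which forces it to argue the containment $M'_S\subseteq M'$; that containment does not follow from the statement of \Cref{thm:remove-clusters} alone (the theorem asserts only that \emph{some} suitable subset exists, not that every mean in $M\setminus M'$ induces a negligible cluster on $S$), so the paper's chain implicitly leans on properties of the removal procedure. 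You instead prove the inclusion $\TnegS\subseteq\TnegX$ --- a non-negligibility witness $s\in S$ is also a witness in $X$, since its membership depends only on $s$ and the means --- so your $M''\in\TnegS$ automatically lies in $\TnegX$ and the weak-coreset inequality applies to $M''$ directly, giving $\phi(S,w_S,M)\ge\tfrac{1}{1+\epsilon}\phi(S,w_S,M'')\ge\tfrac{1-\epsilon}{1+\epsilon}\phi(X,w,M'')\ge\tfrac{1-\epsilon}{1+\epsilon}\phi(X,w,M)\ge(1-3\epsilon)\phi(X,w,M)$. What your route buys is that it uses \Cref{thm:remove-clusters} and the transitivity observation only in their stated forms, needs no claim about which means get removed, and (as you note) requires no case distinction on whether $M\in\TnegX$; the constants come out the same.
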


\begin{proof}
  We need to verify that the coreset property \eqref{eq:def:coreset} holds for all solutions $M\subset \R^D$ with $\abs{M}\leq K$.
  Since $(S,w_s,\TnegX)$ is a weak $\epsilon$-coreset we only have to show this for all $M\not\in\TnegX$.
	From \Cref{thm:remove-clusters}, we know that there exists $M'\in\TnegX$, $M'\subseteq M$ with $\phi(X,w,M') \leq (1+\epsilon)\phi(X,w,M)$.

	We obtain the upper bound by observing that
	\begin{align*}
		\phi(S,w_S,M) &\leq \phi(S,w_S,M') \tag{$M'\subseteq M$} \\
									&\leq (1+\epsilon) \phi(X,w,M') \tag{weak coreset property} \\
									&\leq (1+\epsilon)^2 \phi(X,w,M) \tag{choice of $M'$} \\
									&\leq (1+3\epsilon) \phi(X,w,M) \ . \tag{$\epsilon\in(0,1)$}
	\end{align*}

	The lower bound is slightly more involved.
	Again, from \Cref{thm:remove-clusters}, we obtain that there exists $M'_S\in\TnegS$, $M'_S\subseteq M$ with $\phi(S,w_S,M'_S) \leq (1+\epsilon)\phi(S,w_S,M)$.
	Recall that for each point, the membership induced by some set of means only depends on the point itself and the given set of means.
	In particular, this membership does not depend on the weight of the point, nor on other data points.
	Hence, if there is no point in $X$ such that the induced membership with respect to some mean $\mu_k\in M$ is larger than some constant, then there is no point in $S\subseteq X$, such that the induced membership to $\mu_k\in M$ is larger than this constant.
	Since $M'\in\TnegX$, it holds that all means in $M\setminus M'$ induce negligible clusters on $S$ and thus $M'_S\subseteq M'$.
	We can conclude
	\begin{align*}
		\phi(S,w_S,M) &\geq \frac{1}{1+\epsilon}\phi(S,w_S,M'_S) \tag{choice of $M'_S$}  \\
									&\geq \frac{1}{1+\epsilon}\phi(S,w_S,M') \tag{$M'_S\subseteq M'$} \\
									&\geq \frac{1-\epsilon}{1+\epsilon}\phi(X,w,M') \tag{weak coreset property} \\
									&\geq \frac{1-\epsilon}{1+\epsilon}\phi(X,w,M) \tag{$M'\subseteq M$} \\
									&\geq (1 - 3\epsilon)\phi(X,w,M) \ . \tag{$\epsilon \geq 0$}
	\end{align*}
\end{proof}

\subsection{Weak Coresets for Solutions with Non-Negligible Clusters}

We adapt Chen's coreset construction for the $K$-means problem \cite{Chen09} to construct a set $S\subseteq X$ and weight function $\wfuncN{w_S}{S}$ such that $(S,w_s,\TnegX)$ is a weak $\epsilon$-coreset of $(X,\one)$ for the fuzzy $K$-means problem.
Applying \Cref{lem:weaktostrong} to this construction yields \Cref{thm:coreset}.

\begin{lemma}\label{lem:weak-coreset}
	There is an algorithm, which computes $S\subseteq X$ and $\wfuncN{w_S}{S}$ such that $(S,w_S,\TnegX)$ is a weak $\epsilon$-coreset of $(X,\one)$ for the fuzzy $K$-means problem, with high probability.
\end{lemma}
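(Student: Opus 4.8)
The plan is to adapt Chen's sampling scheme \cite{Chen09} and to verify the two defining properties of a weak coreset separately, the nontrivial work lying entirely in the two-sided cost approximation over all of $\TnegX$. First I would compute, in time $\cO(NDK\log(\delta^{-1}))$, a constant-factor (possibly bicriteria) approximation $A\subset\R^D$ for the $K$-means problem on $(X,\one)$ and partition $X$ by nearest center in $A$. For each center $a\in A$ with cluster $C_a$ I would split $C_a$ into geometrically growing rings $R_{a,0},R_{a,1},\dots$, where $R_{a,j}$ collects the points of $C_a$ whose distance to $a$ lies in a dyadic band $[2^j\rho_a,2^{j+1}\rho_a)$ for a base radius $\rho_a$ derived from the average $K$-means cost. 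From each ring I would sample a fixed number of points uniformly and assign each sampled point the weight $\abs{R_{a,j}}/(\text{sample size})$, so that the ring's total weight is preserved. The union of all sampled points with these weights is the candidate pair $(S,w_S)$.

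The heart of the argument is the approximation property $\phi(S,w_S,M)\in[1\pm\epsilon]\phi(X,w,M)$ for every $M\in\TnegX$. For a single fixed $M$ the estimator is unbiased ring by ring, so the task is a variance (or bounded-difference) bound. Here I would use the closed form of the optimal fuzzy per-point cost, $f_M(x):=\sum_{\OneTo{k}{K}}r_M^*(x,k)^m\norm{x-\mu_k}^2=\bigl(\sum_{\OneTo{k}{K}}\norm{x-\mu_k}^{-2/(m-1)}\bigr)^{1-m}$, which sandwiches the fuzzy per-point cost between the $K$-means per-point cost and a $K^{1-m}$ fraction of it, i.e.\ $K^{1-m}d(x,M)^2\le f_M(x)\le d(x,M)^2$. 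Consequently $\phi(X,w,M)\ge K^{1-m}\km(X,w,M)$, and the dyadic rings built for the $K$-means cost also control the spread of $f_M$ inside each ring. A Hoeffding/Bernstein bound applied per ring, summed over the $\cO(\log(N))$ rings per center, then yields the single-$M$ concentration; the $K^{1-m}$ slack is what inflates the required sample size and is one source of the $K$-exponent in the final size bound.

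The main obstacle is upgrading single-$M$ concentration to a statement holding simultaneously for the whole continuous family $\TnegX$. For this I would discretize the admissible means by a net and take a union bound. The crucial observation is that non-negligibility, in the sense of \Cref{def:negligible}, confines the means: if the $k^{th}$ cluster of $M$ is not negligible then some point $x$ has $r_M^*(x,k)\ge\epsilon/(4mK^2)$, which by the membership formula forces $\norm{x-\mu_k}\le(4mK^2/\epsilon)^{(m-1)/2}\,d(x,M)$. Hence every mean of an $M\in\TnegX$ lies within a $\mathrm{poly}(K,1/\epsilon)$ multiple of the data diameter of some input point, so the relevant region is bounded and admits a net of cardinality $\exp(\cO(DK\cdot\mathrm{polylog}))$; its logarithm contributes the factor $DK$, while the negligibility threshold $\epsilon/(4mK^2)$ raised to the appropriate power contributes the remaining $K$-exponent, producing $K^{4m-1}$. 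A Lipschitz estimate on $f_M$ in $M$ over this bounded region then transfers the per-net-point concentration to all of $\TnegX$, and a union bound over the net together with standard probability amplification accounts for the $\log(\delta^{-1})$ factor. I expect controlling this Lipschitz dependence of the nonlinear optimal membership on $M$, while ensuring that non-negligibility survives the discretization, to be the most delicate part.

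Finally, the existence property is comparatively soft: by \Cref{thm:remove-clusters} there is a near-optimal set of means $M'$ whose induced optimal clustering has no negligible cluster, so $M'\in\TnegX$ and $\phi(X,w,M')\le(1+\epsilon)\phi^{opt}(X,w)$. Applying the already-established approximation property to $M'$ gives $\phi(S,w_S,M')\le(1+\epsilon)\phi(X,w,M')\le(1+\epsilon)^2\phi^{opt}(X,w)$, which is exactly the required condition after rescaling $\epsilon$ by a constant factor.
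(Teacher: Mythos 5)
Your construction and much of your scaffolding coincide with the paper's actual proof: the ring-based sampling with mass-preserving weights is exactly \Cref{alg:chen}; the fixed-finite-candidate-set concentration (the paper uses a bound of Haussler rather than Hoeffding/Bernstein, with the same $K^{m-1}$ sandwich between fuzzy and $K$-means cost) is \Cref{thm:weaker-coresets}; the grid over a bounded region, the union bound over $K$-element subsets of the net, and the Lipschitz-type transfer from net points to nearby means correspond to the paper's discretization $g$ of the search space $\cU$ together with \Cref{lem:closeness,lem:close-k-means,lem:close-fuzzy-k-means,lem:inside-search-space}; and your derivation of the existence property from \Cref{thm:remove-clusters} is the intended one.

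The genuine gap is your claim that non-negligibility confines the means of every $M\in\TnegX$ to a bounded region. This is false. Your inequality $\norm{x-\mu_k}\le(4mK^2/\epsilon)^{(m-1)/2}\,d(x,M)$ is correct, but it bounds $\norm{x-\mu_k}$ only \emph{relative} to $d(x,M)$, and $d(x,M)$ is unbounded over $\TnegX$: fix any $K$ distinct means in a configuration of bounded diameter and translate the whole configuration to distance $T$ from the data; as $T\to\infty$ every optimal membership tends to $1/K>\epsilon/(4mK^2)$, so for $T$ large enough no cluster is negligible, yet every mean is arbitrarily far from every data point. Consequently $\TnegX$ admits no finite net, and your union bound says nothing about such solutions. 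The paper closes exactly this hole with a second, separate case (\Cref{lem:outside-search-space}): if some mean $\mu_k$ lies outside the bounded search space $\cU$ of \Cref{lem:u-coverage}, then non-negligibility is used in the \emph{opposite} direction --- at least one point has membership at least $\epsilon/(4mK^2)$ to $\mu_k$, and every point is at enormous distance from $\mu_k$, so $\phi(X,\one,M)$ is at least of order $\km(X,\one,A)/\teps^2$. Since the total squared displacement incurred by matching each $x\in X$ to its sampled representative in $S$ is at most $16\km(X,\one,A)\le\teps^2\phi(X,\one,M)$, the comparison lemma (\Cref{lem:cost-comp}) then gives $\abs{\phi(X,\one,M)-\phi(S,w_S,M)}\le\epsilon\,\phi(X,\one,M)$ with no net at all. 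You have the right ingredient (your displayed inequality is the starting point of that very computation), but as written your argument fails for every $M\in\TnegX$ whose means are all far from the data, and such solutions must be handled for the weak-coreset property to hold.
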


Along the lines of Chen's original proof we first show how to construct a set fulfilling the coreset property for a finite amount of solutions.

\begin{algorithm}[ht]
  \SetEndCharOfAlgoLine{}
  \caption{\textsc{Chen's Sampling}}\label{alg:chen}
	\KwIn{$X\subset\R^D$,  $K\in\N$, $\gamma\in\N$, $\alpha,\beta\in\R_{\geq 1}$, an $(\alpha,\beta)$-bicriteria approximation $A\subset\R^D$ of $K$-means on $X$, $\delta\in(0,1)$, $\epsilon\in(0,1)$}
  Let $A_1,\dots,A_{\abs{A}}\subseteq X$ be an $\abs{A}$-means partition induced by $A$.\;
  $F \gets \lceil \frac{1}{2}\log(\alpha N)\rceil$\;
  $R \gets \sqrt{\frac{\km(X,\one,A)}{\alpha N}}$. \;
  $q \gets \mathbf{q}\cdot \left(\frac{\alpha K^{m-1}}{\epsilon}\right)^2 \ln\left(\frac{4\beta K F \gamma^K}{\delta}\right)$ for a sufficiently large constant $\mathbf{q}$\;
	\For{$\OneTo{k}{\beta K}$ and $\OneTo{j}{F}_0$}
  {
    \[ L_{k,j} \gets \begin{cases} \ball(a_k,R) & \mbox{if $j = 0$} \\ \ball(a_k,2^jR) \setminus \ball(a_k,2^{j-1}R) & \mbox{if $j \geq 1$.} \end{cases} \]
    where $\ball(c,r) = \{x\in\R^D|\norm{x-c}\leq r\}$. \;
    $X_{k,j} \gets L_{k,j}\cap A_k$\;
    \If{$X_{k,j}\neq \emptyset$}
    {
      $S_{k,j}\gets\emptyset$\;
      \For{$\OneTo{i}{q}$}
      {
        Sample $x$ uniformly at random from $X_{k,j}$\;
        \If{$x\not\in S_{k,j}$}
        {
          $S_{k,j} \gets S_{k,j}\cup \{x\}$\;
          $w_S(x) \gets \frac{\abs{X_{k,j}}}{q}$\;
        }
        \Else
        {
          $w_S(x) \gets w_S(x) + \frac{\abs{X_{k,j}}}{q}$\;
        }
      }
    }
  }
  \Return{$\left( \bigcup_{\OneTo{k}{\beta K}, \OneTo{j}{F}_0} S_{k,j},w_S\right)$}\;
\end{algorithm}

\begin{lemma}\label{thm:weaker-coresets}
  For each $\Gamma\subset\R^D$ with $\abs{\Gamma} \leq \gamma$ the output of \Cref{alg:chen} satisfies $S\subseteq X$, $w_S:S\rightarrow \N$, $\sum_{s\in S} w_S(s) = N$, and, with probability $1-\delta$,
  \[ \forall M\subseteq \Gamma, \abs{M}\leq K:
  \begin{array}{rl}
    \phi(S,w_S,M) &\in [1\pm\epsilon] \phi(X,\one,M) \\
    \km(S,w_S,M) &\in [1\pm\epsilon/K^{m-1}]\km(X,\one,M) \\
  \end{array} \ .  \]
\end{lemma}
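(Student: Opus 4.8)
The plan is to follow the template of Chen's $K$-means analysis, but to run the concentration argument directly against the \emph{per-point} fuzzy cost. First I would dispose of the deterministic claims. Every sampled point is drawn from some $X_{k,j}\subseteq X$, so $S\subseteq X$; and over the $q$ sampling iterations in a nonempty cell $X_{k,j}$ a total weight of exactly $q\cdot\abs{X_{k,j}}/q=\abs{X_{k,j}}$ is distributed among its samples. Since the cells $\{X_{k,j}\}$ partition $X$ --- the choice $F=\lceil\tfrac12\log(\alpha N)\rceil$ guarantees $d(x,A)\le 2^FR$ for every $x$ because $d(x,A)^2\le\km(X,\one,A)=\alpha NR^2$, so no point escapes the rings --- the weights sum to $\sum_{k,j}\abs{X_{k,j}}=N$. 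The key structural reduction is that, for a fixed solution $M$, the optimal membership $r_M^*$ is determined pointwise and independently of the weights and of the other points; hence both costs are genuine weighted sums of per-point functions, $\phi(X,\one,M)=\sum_{x\in X}f_M(x)$ and $\km(X,\one,M)=\sum_{x\in X}d(x,M)^2$ with $f_M(x)=\sum_k r_M^*(x,k)^m\norm{x-\mu_k}^2$, and the same identities hold verbatim for $(S,w_S)$. In particular, uniform sampling with the weights $\abs{X_{k,j}}/q$ makes $\phi(S,w_S,M)$ and $\km(S,w_S,M)$ \emph{unbiased} estimators of $\phi(X,\one,M)$ and $\km(X,\one,M)$.

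Next I would fix a candidate $M\subseteq\Gamma$, $\abs{M}\le K$, and bound the deviation of each estimator by Hoeffding's inequality applied cell by cell. The ring geometry supplies the range bound: every $x\in X_{k,j}$ satisfies $\norm{x-a_k}\le 2^jR$, so by the triangle inequality $d(x,M)$ differs from $d(a_k,M)$ by at most $2^jR$, and the per-sample contributions $\abs{X_{k,j}}f_M(x)/q$ and $\abs{X_{k,j}}d(x,M)^2/q$ are controlled in terms of the local scale $2^jR$. Summing the resulting per-cell error bounds and using the ring-mass identity $\sum_{k,j}\abs{X_{k,j}}(2^jR)^2=\cO(\km(X,\one,A))$ (which follows from $d(x,a_k)\ge 2^{j-1}R$ on each annulus together with $R^2=\km(X,\one,A)/(\alpha N)$ on the central ball) yields a total additive error of order $(\epsilon/(\alpha K^{m-1}))\,\km(X,\one,A)$ once $q$ is set as in the algorithm. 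The bicriteria guarantee then converts this into a multiplicative bound: since $\abs{M}\le K$ we have $\km(X,\one,A)\le\alpha\,\km(X,\one,M)$, so the additive error is at most $\cO(\epsilon/K^{m-1})\,\km(X,\one,M)$, and the $(\alpha K^{m-1}/\epsilon)^2$ factor together with the constant $\mathbf{q}$ pushes the per-solution, per-cell, per-cost failure probability below the required budget.

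For the $\km$ estimate this additive bound is exactly the claimed $\km(S,w_S,M)\in[1\pm\epsilon/K^{m-1}]\km(X,\one,M)$. For the fuzzy estimate I would invoke the pointwise sandwich $d(x,M)^2/K^{m-1}\le f_M(x)\le d(x,M)^2$ (the left equality being attained when all means are equidistant), which gives $\phi(X,\one,M)\ge\km(X,\one,M)/K^{m-1}$; hence the same additive error of order $(\epsilon/K^{m-1})\,\km(X,\one,M)$ is at most $\epsilon\,\phi(X,\one,M)$, yielding $\phi(S,w_S,M)\in[1\pm\epsilon]\phi(X,\one,M)$. This is precisely why the two tolerances differ by $K^{m-1}$ and why $q$ carries a $K^{2(m-1)}$ term. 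Finally I would take a union bound over all failure events: there are at most $\gamma^K$ solutions $M\subseteq\Gamma$ with $\abs{M}\le K$, at most $\beta KF$ cells, and two cost functions, and the term $\ln(4\beta KF\gamma^K/\delta)$ inside $q$ is exactly what is needed to drive the total failure probability below $\delta$.

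The main obstacle is the concentration step for the fuzzy cost, specifically the range/variance estimate needed to run Hoeffding against $f_M$ rather than against $d(\cdot,M)^2$. Unlike the $K$-means cost, $f_M(x)$ depends simultaneously on the distances to all of $M$, so I must show that the variation of $f_M$ across a single cell $X_{k,j}$ is still governed by the local scale $2^jR$; this calls for a Lipschitz-type estimate of the map from a point's distance profile to $f_M$, and careful bookkeeping that keeps the additive error tied to $\km(X,\one,M)$ rather than to $\phi(X,\one,M)$ directly (the latter can be a $K^{m-1}$ factor smaller and would not concentrate on its own). Lining up the constants in $q$ so that a \emph{single} sample set simultaneously certifies the $\epsilon$-bound for $\phi$ and the $(\epsilon/K^{m-1})$-bound for $\km$, uniformly over all $M\subseteq\Gamma$, is the delicate part; the remainder is the standard Chen bookkeeping.
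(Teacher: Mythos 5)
Your route is the same as the paper's (ring decomposition from the bicriteria solution, per-cell concentration for the per-point costs, the ring-mass bound, bicriteria conversion, the $K^{m-1}$ sandwich, union bound over the at most $\gamma^K$ candidate solutions), but the proposal stops short exactly at the step where the fuzzy structure enters, and what you flag there as the main obstacle is misdiagnosed. You say you must show that the variation of $f_M$ across a single cell $X_{k,j}$ is governed by the local scale $2^jR$ via a Lipschitz-type estimate of the membership structure. That is both unnecessary and, as stated, false: already for the $K$-means cost the oscillation of $d(\cdot,M)^2$ over $X_{k,j}$ contains the cross term $2^{j}R\cdot d(\tm_{k,j},M)$, which dwarfs $2^{2j}R^2$ whenever $M$ is far from the cell. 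What the concentration inequality actually needs (Haussler's bound in the paper, Hoeffding in your version) is only a range anchored at zero, and that is supplied by the upper half of the sandwich you already invoke for a different purpose: for every $x\in X_{k,j}$ one has $0\le f_M(x)\le d(x,M)^2\le 2d(\tm_{k,j},M)^2+2^{2j+3}R^2$, because $x$ and $\tm_{k,j}$ both lie in $\ball(a_k,2^jR)$. So the fuzzy cost concentrates for exactly the same reason, and with exactly the same range, as the $K$-means cost; no analysis of how optimal memberships respond to moving a point is required.

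Consequently the per-cell deviation is of the form $\cO(\epsilon')(\abs{X_{k,j}}d(\tm_{k,j},M)^2+\abs{X_{k,j}}2^{2j}R^2)$, and this also corrects a slip in your bookkeeping: your intermediate claim that the total additive error is of order $(\epsilon/(\alpha K^{m-1}))\km(X,\one,A)$ drops the anchor contribution $\sum_{k,j}\abs{X_{k,j}}d(\tm_{k,j},M)^2\le\km(X,\one,M)$, and this term cannot be absorbed into $\km(X,\one,A)$, since the bicriteria inequality only goes one way ($M$ may be arbitrarily bad, so $\km(X,\one,M)\gg\km(X,\one,A)$ is possible). The correct total, $\cO(\epsilon')(\km(X,\one,M)+\km(X,\one,A))$, still yields $\cO(\epsilon/K^{m-1})\km(X,\one,M)$ because $\alpha\ge 1$, after which your conversion to the two claimed bounds (directly for $\km$, via $\phi(X,\one,M)\ge\km(X,\one,M)/K^{m-1}$ for $\phi$) and the union bound go through as you describe. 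One last omission: the statement requires $w_S:S\rightarrow\N$, but the accumulated weights $\abs{X_{k,j}}/q$ need not be integral; the paper acknowledges this and defers to the rounding step in Chen's original proof, and your argument needs to do the same.
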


\begin{proof}
  Observe, that the $X_{k,j}$ form a partition of $X$.
  Since $X_{k,j}\subseteq A_k\subseteq X$, the $X_{k,j}$ are pairwise disjoint subsets of $X$.
  It remains to show that $X\subseteq \bigcup_{k,j} X_{k,j}$.
  If $y\notin\bigcup_{k,j} X_{k,j}$, then
  \begin{align*}
    d(y,A) & > 2^F R = \sqrt{\alpha N}\cdot \sqrt{\frac{\km(X,\one,A)}{\alpha N}} = \sqrt{\km(X,\one,A)} \ .
  \end{align*}
  Since for all $x\in X$ we have $d(x,A)^2 \leq \km(X,\one,A)$, $y$ can not be in $X$.

  Further, notice that for each $\OneTo{k}{\beta K}$, $\OneTo{j}{F}_0$ we have $\sum_{s\in S_{k,j}}w_S(s) = \abs{X_{k,j}}$.

  By our description of \Cref{alg:chen}, the weights $w_S$ are not necessarily natural numbers.
  In the original proof \cite{Chen09} Chen shows to solve this problem.

  Fix an arbitrary $\Gamma \subset\R^D$ with $\abs{\Gamma}\leq \gamma$.
  Consider an arbitrary but fixed $M\subseteq\Gamma$ with $\abs{M}\leq K$.
  By the triangle inequality, and since the $X_{k,j}$ form a partition of $X$, we obtain

  \[ \abs{\phi(X,\one,M) - \phi(S,w_S,M)} \leq \sum_{k=1}^{\beta K}\sum_{j=0}^F \abs{\phi(X_{k,j},\one,M) - \phi(S_{k,j},w_S,M)} \ . \]

  Fix some $\OneTo{k}{\beta K}$, $\OneTo{j}{F}_0$ and consider the summand $\abs{\phi(X_{k,j},\one,M) - \phi(S_{k,j},w_S,M)}$.
  By definition of the weights, we have
  \begin{align*}
    \frac{1}{\abs{X_{k,j}}}  \phi(S_{k,j},w_S,M)
    &= \frac{1}{\abs{X_{k,j}}} \sum_{s\in S_{k,j}} \phi(\{s\},w_S,M) \\
    &= \frac{1}{q} \sum_{s\in S_{k,j}} \phi(\{s\}, \one, M) \ ,
  \end{align*}

  and thus, we can write
  \begin{align*}
    \abs{\phi(X,\one,M) - \phi(S,w_S,M)}
    &= \abs{X_{k,j}}\abs{\frac{1}{\abs{X_{k,j}}} \phi(X_{k,j}, \one, M) - \frac{1}{q} \phi(S_{k,j}, \one, M)} \ .
  \end{align*}

  Let $\tm_{k,j} = \arg\min_{x\in X_{k,j}}\{d(x,M)\}$, $\epsilon' := \epsilon/(44\alpha K^{m-1})$, and $\delta' := \delta / (2 \beta K R (\gamma+1)^K )$.
  Applying a concentration bound due to Haussler \cite{haussler92} yields that with probability at least $1- \delta'$, we have
  \[ \abs{\phi(X_{k,j}, \one, M) - \phi(S_{k,j},w_S,M)} \leq 4 \epsilon' \left( \abs{X_{k,j}} d(\tm_{k,j},M)^2 + \abs{X_{k,j}}  2^{2j+1}R^2 \right) \ . \]
  We bound the two summands separately.

  For the first term we straightforwardly observe.
  \[ \abs{X_{k,j}}d(\tm_{k,j},M)^2 \leq \sum_{x\in X_{k,j}} d(x,M)^2 = \km(X_{k,j},\one,M) \ . \]

  Next, consider the second term.
  For $j=0$, we know
  \[ \abs{X_{k,j}}2^{2j+1}R^2 = \abs{X_{k,j}}2R^2  = \frac{2}{\alpha}\frac{\abs{X_{k,j}}}{N}\km(X,\one,A) \leq 2 \frac{\abs{X_{k,j}}}{N}\km(X,\one,A) \ .\]
  For $j\geq 1$, recall that $X_{k,j}\subseteq L_{k,j}$.
  Hence, for all $x\in X_{k,j}$, we have $2^{2j-2} R^2 \leq \norm{x-a_k}^2 = d(x,A)^2$ and thus
  \[ \abs{X_{k,j}}2^{2j+1}R^2 \leq 8\sum_{x\in X_{k,j}} \norm{x-a_k}^2 = 8\km(X_{k,j},\one,A) \ . \]

  Putting this together we obtain
  \begin{align*}
    \abs{\phi(X_{k,j}, \one, M) - \phi(S_{k,j},w_S,M)} \leq 4 \epsilon' (&\km(X_{k,j},\one,M) + \\
      &8\km(X_{k,j},\one,A) + 2 \frac{\abs{X_{k,j}}}{N}\km(X,\one,A) ) \ .
  \end{align*}

  Note that for $X_{k,j} = \emptyset$ we have $\phi(X_{k,j}, \one, M) = \phi(S_{k,j}, w_S, M) = 0$.

  By using the union bound, we know that this probabilistic upper bound holds simultaneously for every $\OneTo{k}{\beta K}$ and $\OneTo{j}{F}_0$ with probability at least $1-\delta/(2 \gamma^K)$.
  Recall, that the $X_{k,j}$ form a partition of $X$.
  Hence, by taking the sum on both sides we obtain
  \[ \sum_{k=1}^{\beta K} \sum_{j=0}^F \abs{\phi(X_{k,j}, \one, M) - \phi(S_{k,j}, w_S, M)} \leq 4\epsilon' \left( \km(X,\one,M) + 10\km(X,\one,A) \right) \ . \]
  Recall, that $\abs{M}\leq K$,
  \begin{align*}
    \km(X,\one,M) &\leq K^{m-1}\phi(X,\one,M) \mbox{ , and}\\
    \km(X,\one,A) &\leq \alpha \km(X,\one,M) \leq \alpha K^{m-1}\phi(X,\one,M) \ .
  \end{align*}
  We can conclude
  \[ \abs{\phi(X,\one,M) - \phi(S,w_S,M)} \leq  4\epsilon'\cdot K^{m-1}(1+ 10\alpha)\phi(X,\one,M) \leq  \epsilon \cdot \phi(X,\one,M) \ , \]
  where the last inequality is by definition of $\epsilon'$ and $\alpha$.

  Note that there are $\abs{\Gamma}^K\leq \gamma^K$ different sets $M\subseteq\Gamma$ with $\abs{M}\leq K$.
  Thus, by union bound, our upper bound holds simultaneously for all such $M$ with probability  at least $1-\delta/2$.

  Following the same line of arguments as before, we obtain that, also with probability $1-\delta/2$ we have for all $M\subseteq\Gamma$ with $\abs{M}\leq K$,
  \[ \abs{\km(X,\one,M) - \km(S,w_S,M)} \leq \epsilon/K^{m-1} \km(X,\one,M) \ . \]

  Finally, using the union bound once more, we combine these two results to conclude the proof.
\end{proof}

Before we start working towards the proof of \Cref{lem:weak-coreset}, we formulate a technical observation which we use to compare the fuzzy cost of equally sized data sets.

\begin{lemma}\label{lem:cost-comp}
  Let $X,Y,M \subset\R^D$ with $\abs{X} = \abs{Y} = N$.
  For all $\epsilon\in[0,1]$ we have
  \[ \abs{\phi(X,\one,M) - \phi(Y,\one,M)} \leq \left(1+\frac{1}{\epsilon}\right)\sum_{\OneTo{n}{N}} \norm{x_n-y_n}^2 + \epsilon\cdot\min\{\phi(X,\one,M),\phi(Y,\one,M) \} \ . \]
\end{lemma}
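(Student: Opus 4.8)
The plan is to establish the bound by first proving a one-sided inequality and then invoking symmetry. Since the statement is symmetric in $X$ and $Y$ except for the $\min$ on the right-hand side, it suffices to show
\[
\phi(X,\one,M) - \phi(Y,\one,M) \leq \left(1+\tfrac{1}{\epsilon}\right)\sum_{\OneTo{n}{N}}\norm{x_n - y_n}^2 + \epsilon\,\phi(Y,\one,M),
\]
together with the analogous inequality obtained by exchanging the roles of $X$ and $Y$. Combining the two bounds then yields the stated form: whichever of $\phi(X,\one,M)$, $\phi(Y,\one,M)$ is larger determines the sign of the difference, and the $\epsilon$-term of the relevant one-sided bound always attaches to the smaller cost, producing $\epsilon\cdot\min\{\phi(X,\one,M),\phi(Y,\one,M)\}$.

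The central idea for the one-sided bound is to avoid comparing the (generally different) optimal membership functions of $X$ and $Y$ directly. Instead I would fix the optimal membership $r := r_M^*$ induced by $M$ for the point set $Y$, and transport it to $X$ by matching $x_n$ with $y_n$, i.e.\ setting $r(x_n,k) := r_M^*(y_n,k)$. Because this $r$ is a feasible membership function for $X$, optimality gives $\phi(X,\one,M) \leq \phi(X,\one,M,r)$, while $\phi(Y,\one,M,r) = \phi(Y,\one,M)$. This reduces the problem to comparing two fuzzy costs that use the \emph{same} membership function, so the comparison decomposes term-by-term over $n$ and $k$.

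For each such term I would apply the relaxed triangle inequality for squared norms, $\norm{a}^2 \leq (1+\epsilon)\norm{b}^2 + (1+\tfrac{1}{\epsilon})\norm{a-b}^2$ (a consequence of expanding $\norm{b+(a-b)}^2$ and Young's inequality $2uv \leq \epsilon u^2 + v^2/\epsilon$), with $a = x_n - \mu_k$ and $b = y_n - \mu_k$, so that $a - b = x_n - y_n$. Summing the resulting bound weighted by $r(x_n,k)^m$ produces the factor $(1+\epsilon)$ in front of $\phi(Y,\one,M,r)=\phi(Y,\one,M)$ and the factor $(1+\tfrac{1}{\epsilon})$ in front of $\sum_{n,k} r(x_n,k)^m \norm{x_n-y_n}^2$. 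Here the fuzzifier is handled by the elementary observation that $r(x_n,k)^m \leq r(x_n,k)$ for $m>1$ and $r(x_n,k)\in[0,1]$, whence $\sum_{k} r(x_n,k)^m \leq \sum_k r(x_n,k) = 1$ and the perturbation term collapses to $\sum_{n}\norm{x_n - y_n}^2$.

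The step I expect to require the most care is the first one: one must resist comparing $X$ and $Y$ under their respective optimal memberships, and instead use optimality only in the single direction $\phi(X,\one,M) \leq \phi(X,\one,M,r)$ with the membership borrowed from $Y$. Getting the perturbation bound to attach its $\epsilon$-factor to $\phi(Y,\one,M)$ rather than to $\phi(X,\one,M)$ is exactly what forces this asymmetric use of optimality, and it is this asymmetry that, after combining with the mirrored inequality, yields the $\min\{\phi(X,\one,M),\phi(Y,\one,M)\}$ in the final statement.
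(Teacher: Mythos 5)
Your proposal is correct and follows essentially the same route as the paper's proof: transport the optimal membership of the cheaper set to the other via the index matching $x_n \leftrightarrow y_n$, use optimality only in the one direction $\phi(X,\one,M) \leq \phi(X,\one,M,r)$, apply the triangle inequality plus Young's inequality (your packaged relaxed triangle inequality is exactly the paper's two-step expansion), and collapse the perturbation term via $\sum_k r(\cdot,k)^m \leq 1$. The only difference is organizational --- you prove both one-sided bounds and then take the case distinction, whereas the paper does the case distinction first --- which is immaterial.
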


\begin{proof}
  Let $r_X,r_Y$ be optimal memberships induced by $M$ on $X$ and $Y$, respectively.
  We distinguish two cases.
  First, if $\phi(X,\one,M) \geq \phi(Y,\one,M)$, then
  \begin{align*}
    \phi(X,\one,M) - \phi(Y,\one,M) \leq &\sum_{\OneTo{n}{N}} \sum_{\OneTo{k}{\abs{M}}} r_y(y_n,k)^m (\norm{x_n-\mu_k}^2 - \norm{y_n-\mu_k}^2) \\
    \leq &\sum_{\OneTo{n}{N}} \sum_{\OneTo{k}{\abs{M}}} r_y(y_n,k)^m (\norm{x_n-y_n}^2 + 2 \norm{x_n-y_n}\norm{y_n-\mu_k}) \\
    = &\sum_{\OneTo{n}{N}} \norm{x_n-y_n}^2 (\sum_{\OneTo{k}{\abs{M}}} r_y(y_n,k)^m)  \\
    &+ 2 \sum_{\OneTo{n}{N}} \sum_{\OneTo{k}{\abs{M}}} \norm{x_n-y_n}\norm{y_n-\mu_k} \ .
  \end{align*}
  Observe, that for all $a\in\R_+$ and $x,y\in\R$ we have $0\leq (ax+y/a)^2 = a^2x^2 - 2xy + y^2/a^2$ and hence $2xy \leq a^2x^2 + y^2/a^2$.
  Thus, we can bound
  \begin{align*}
    2 \sum_{\OneTo{n}{N}} \sum_{\OneTo{k}{\abs{M}}} \norm{x_n-y_n}\norm{y_n-\mu_k} \leq &\sum_{\OneTo{n}{N}} \sum_{\OneTo{k}{\abs{M}}} \left( \frac{1}{\epsilon}\norm{x_n-y_n}^2 + \epsilon\norm{y_n-\mu_k}^2\right) \\
    = &\epsilon\phi(Y,\one,M) \\
    &+ \frac{1}{\epsilon}\sum_{\OneTo{n}{N}} \norm{x_n-y_n}^2 (\sum_{\OneTo{k}{\abs{M}}} r_y(y_n,k)^m) \ .
  \end{align*}
  Recall, that $r_Y$ is a membership function, thus for each $y\in Y$ we have $\sum_{\OneTo{k}{\abs{M}}}r(y,k)^m \leq 1$.
  We can conclude
  \begin{align*}
    \phi(X,\one,M) - \phi(Y,\one,M) &\leq \left(1+\frac{1}{\epsilon}\right)\sum_{\OneTo{n}{N}} \norm{x_n-y_n}^2 + \epsilon\phi(Y,\one,M) \ .
  \end{align*}

  Second, if $\phi(X,\one,M) < \phi(Y,\one,M)$, then we obtain
  \[ \phi(X,\one,M) - \phi(Y,\one,M) \leq \left(1+\frac{1}{\epsilon}\right)\sum_{\OneTo{n}{N}} \norm{x_n-y_n}^2 + \epsilon\phi(X,\one,M) \]
   analogously.
\end{proof}

\begin{algorithm}[ht]
  \SetEndCharOfAlgoLine{}
  \caption{\textsc{Fuzzy $K$-Means Coreset}}\label{alg:fuzzy-coresets}
  \KwIn{$X\subset \R^D$, $K\in\N$, $\delta\in (0,1)$, $\epsilon\in (0,1)$}
  Apply the algorithm from \cite{aggarwal09} which computes, with probability $1-\delta/3$ an $(\alpha,\beta)$-bicriteria approximation $A\subset\R^D$ of $K$-means on $X$\;
  \[ \teps \gets \frac{\epsilon}{\mathbf{a}\alpha K^{m-1}}  \] for a sufficiently large constant $\mathbf{a}$\;
  \[ \gamma \gets \beta K \left( \frac{1}{2}\log\left( \frac{\mathbf{b}\alpha N}{\teps^2(\epsilon/(4mK^2))^m} \right) + 1 \right) \left(\frac{\mathbf{c}}{\teps}\right)^D \] for sufficiently large constants $\mathbf{b,c}$\;
   $(S,w_S) \gets$ \Cref{alg:chen} with $X$, $K$, $\gamma$, $\alpha$, $\beta$, $\delta/3$, $\teps$ \;
  \Return{$(S,w_S)$}\;
\end{algorithm}

Next, we prove \Cref{lem:weak-coreset} by analysing a single run of \Cref{alg:fuzzy-coresets}.
Assume Step~1 succeeded, i.e.\ $\abs{A}\leq \beta K$ and $\km(X,\one,A) \leq \alpha \km^{opt}(X,\one)$.
Fix some solution $M\in \TnegX$, i.e.\ $\abs{M} \leq K$ and $\forall k\in\abs{M} \exists x\in X: r(x,k) > \epsilon/(4mK^2)$.

\paragraph{Search Space}
We define a search space $\cU$ consisting of large balls around the means in $A$.
Our analysis afterwards distinguishes two cases: Is $M$ contained $\cU$, or not.
Let
\[ E = \left\lfloor\frac{1}{2}\log\left( \frac{\mathbf{b}\alpha N}{\teps^2(\epsilon/(4mK^2))^m} \right)\right\rfloor \mbox{ and } \cU = \bigcup_{a\in A} \ball (a,2^E R) \ . \]
We observe that the search space $\cU$ covers large balls around the points of $X$, as well.
\begin{lemma}\label{lem:u-coverage}
  \[ \bigcup_{x\in X} \ball(x,r)\subseteq \cU \mbox{ where } r = \frac{1}{2}\sqrt{\frac{\mathbf{b} \km(X,\one,A)}{\teps^2(\epsilon/(4mK^2))^m}} \  . \]
\end{lemma}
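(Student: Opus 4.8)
The plan is to prove the inclusion pointwise via the triangle inequality. Fix $x\in X$ and an arbitrary $y\in\ball(x,r)$; I want to exhibit a center $a\in A$ with $\norm{y-a}\leq 2^E R$, which then witnesses $y\in\cU$. Let $a\in A$ be a center closest to $x$, so that $\norm{x-a}=d(x,A)$. Since $d(x,A)^2$ is just one of the nonnegative summands making up $\km(X,\one,A)=\sum_{x'\in X}d(x',A)^2$, we have $d(x,A)\leq\sqrt{\km(X,\one,A)}$. The triangle inequality then yields
\[ \norm{y-a}\leq\norm{y-x}+\norm{x-a}\leq r+\sqrt{\km(X,\one,A)} \ . \]
Hence it suffices to establish the single numerical inequality $r+\sqrt{\km(X,\one,A)}\leq 2^E R$.

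For that final inequality I would simply unfold the definitions. Writing $c_0=\teps^{-2}(\epsilon/(4mK^2))^{-m}$ for brevity, we have $r=\tfrac12\sqrt{\mathbf{b}\,c_0\,\km(X,\one,A)}$, $R=\sqrt{\km(X,\one,A)/(\alpha N)}$, and $E=\lfloor\tfrac12\log(\mathbf{b}\,c_0\,\alpha N)\rfloor$. From $E>\tfrac12\log(\mathbf{b}\,c_0\,\alpha N)-1$ we get $2^E>\tfrac12\sqrt{\mathbf{b}\,c_0\,\alpha N}$, and therefore
\[ 2^E R>\tfrac12\sqrt{\mathbf{b}\,c_0\,\alpha N}\cdot\sqrt{\tfrac{\km(X,\one,A)}{\alpha N}}=\tfrac12\sqrt{\mathbf{b}\,c_0\,\km(X,\one,A)}=r \ . \]
So $2^E R$ already dominates $r$, and the only remaining task is to also swallow the additive $\sqrt{\km(X,\one,A)}$. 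This is exactly where the freedom in choosing $\mathbf{b}$ enters: since $\teps\leq 1$ and $\epsilon/(4mK^2)\leq 1$ with $m>1$, we have $c_0\geq 1$, so taking $\mathbf{b}$ large makes $r=\tfrac12\sqrt{\mathbf{b}\,c_0}\cdot\sqrt{\km(X,\one,A)}$ exceed $\sqrt{\km(X,\one,A)}$ by an arbitrarily large factor. Thus $\sqrt{\km(X,\one,A)}$ is a lower-order term relative to $r$, and with $\mathbf{b}$ chosen sufficiently large the estimate $r+\sqrt{\km(X,\one,A)}\leq 2^E R$ goes through, completing the argument.

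I expect the main obstacle to be precisely this last piece of bookkeeping rather than anything geometric. The floor in the definition of $E$ costs a factor of up to $2$ in the lower bound for $2^E$, and that lost factor is exactly the $\tfrac12$ sitting in front of $r$, so the crude comparison leaves essentially no slack. One therefore has to lean on the largeness of $\mathbf{b}$ (together with $c_0\geq 1$) to manufacture the room needed to absorb both the floor loss and the additive $\sqrt{\km(X,\one,A)}$ coming from the distance of $x$ to its nearest center. The geometric half of the proof — the triangle inequality and the bound $d(x,A)\leq\sqrt{\km(X,\one,A)}$ — is entirely routine; all the care lies in matching the constants appearing in $r$, $R$, and $E$.
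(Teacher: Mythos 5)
Your geometric reduction is sound, and it is in substance identical to the paper's argument: the paper phrases it contrapositively (if $\ball(x,r)\not\subseteq\cU$ then $d(x,A)\geq 2^E R - r$, contradicting $d(x,A)\leq\sqrt{\km(X,\one,A)}$), which is exactly your inequality rearranged. The genuine gap is your last step, the claim that choosing the constant $\mathbf{b}$ large enough yields $r+\sqrt{\km(X,\one,A)}\leq 2^E R$. This does not follow, and no choice of a universal constant $\mathbf{b}$ can make it follow. Write $c_0=\teps^{-2}(\epsilon/(4mK^2))^{-m}$ and $t=\tfrac12\log(\mathbf{b}\alpha N c_0)$, so that $E=\lfloor t\rfloor$, $2^t R=2r$, and hence $2^E R = 2^{1-f}r$, where $f\in[0,1)$ is the fractional part of $t$. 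What you need is
\[ 2^E R - r = \bigl(2^{1-f}-1\bigr)\,r \;\geq\; \sqrt{\km(X,\one,A)} \ , \quad\text{equivalently}\quad \bigl(2^{1-f}-1\bigr)\,\tfrac{1}{2}\sqrt{\mathbf{b}c_0} \;\geq\; 1 \ . \]
Increasing $\mathbf{b}$ does make $r$ huge compared to $\sqrt{\km(X,\one,A)}$, but it scales $r$ and $2^E R$ in lockstep (both grow like $\sqrt{\mathbf{b}}$, up to the floor), so it does nothing to the ratio $2^E R/r=2^{1-f}$, which is governed by $t\bmod 1$. For any fixed $\mathbf{b}$ there are admissible inputs — tune $\epsilon$, hence $c_0$ and $t$, continuously — for which $f$ is so close to $1$ that $(2^{1-f}-1)\sqrt{\mathbf{b}c_0}<2$, and then your chain of inequalities breaks. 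Your own closing remark, that the floor eats exactly the factor $\tfrac12$ in $r$ and leaves "essentially no slack," is precisely the problem: the slack is not merely small, it can be an arbitrarily small fraction of $\sqrt{\km(X,\one,A)}$, and largeness of $\mathbf{b}$ is the wrong lever to restore it.

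For comparison, the paper closes this step by treating $2^E R$ as if it equalled the un-floored value $\sqrt{\mathbf{b}c_0\km(X,\one,A)}=2r$ (it even records the upper bound $2^E R\leq 2r$ and then writes $2^E R - r = 2r - r$, substituting an upper bound where a lower bound is required), which hands it a slack of $r\geq\tfrac12\sqrt{\mathbf{b}}\sqrt{\km(X,\one,A)}>\sqrt{\km(X,\one,A)}$ once $\mathbf{b}>4$. So the paper is cavalier about exactly the point you stumbled on; you were right not to discard the floor, but the honest repair is a constant-factor change in the definitions rather than inflating $\mathbf{b}$: either define $E$ with a ceiling instead of a floor, or put $\tfrac14$ instead of $\tfrac12$ in the definition of $r$. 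With either change the floor bound gives $2^E R\geq 2r$, hence $2^E R - r\geq r\geq\tfrac14\sqrt{\mathbf{b}}\sqrt{\km(X,\one,A)}$, which exceeds $\sqrt{\km(X,\one,A)}$ for $\mathbf{b}>16$, and the rest of your argument (and the paper's) then goes through verbatim.
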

\begin{proof}
  Assume there exists $x\in X$ with $\ball(x,r)\not\subseteq \cU$.
  Hence, for all $a\in A$ we have $\ball(x,r)\not\subseteq \ball(a,2^E R)$ and thus
  \[ d(x,A) \geq 2^E R - r \ . \]
  Observe that
  \[ 2^E R \leq \sqrt{\frac{\mathbf{b}\km(X,\one,A)}{\teps^2(\epsilon/(4mK^2))^m}} \]
  Hence,
  \begin{align*}
    d(x,A) \geq 2^E R - r &= \sqrt{\frac{\mathbf{b}\km(X,\one,A)}{\teps^2(\epsilon/(4mK^2))^m}} - \frac{1}{2}\sqrt{\frac{\mathbf{b} \km(X,\one,A)}{\teps^2(\epsilon/(4mK^2))^m}} \\
    &\geq \sqrt{\mathbf{b}\km(X,\one,A)} - \frac{1}{2}\sqrt{\mathbf{b} \km(X,\one,A)} \tag{$\teps^2(\epsilon/(4mK^2))^m \leq 1$} \\
    &> \sqrt{\km(X,\one,A)} \ . \tag{for $\mathbf{b}$ large enough}
  \end{align*}
  This is a contradiction to the definition of $K$-means cost.
\end{proof}

Next, we define a function $g$ which discretizes $\cU$, i.e.\ a function $g:\cU\rightarrow\cU$ with a finite image.
Similar to before, let
\[ \cU_{k,j} = \begin{cases} \ball(a_k,R) & \mbox{if $j= 0$} \\ \ball(a_k,2^j R)\setminus \ball(a_k 2^{j-1} R) & \mbox {if $j \geq 1$} \end{cases} \ , \]
for all $\OneTo{j}{E}$, hence $\bigcup_{k,j} \cU_{k,j} = \cU$.
Assume that each set $\cU_{k,j}$ is partitioned into cells via an axis-parallel grid with side length
\[ \teps\frac{2^j R}{\sqrt{D}} \ . \]
For each cell pick some representative inside the cell.
Let $G$ be some function mapping each point $x\in X$ to a cell containing $x$, i.e.\ $x\in G(x)\subseteq \cU_{k,j}$ for some $k$ and $j$, where ties are broken arbitrarily.
Finally, we set $g(x)$ to be the representative of $G(x)$.

Let $G = \{ g(u) \mid u\in \cU\}$.
Using the volume argument presented in \cite{Chen09}, one can prove that $\abs{G} \leq \abs{A}\cdot(E+1)\cdot(\mathbf{c}/\teps)^D = \gamma$.
By \Cref{thm:weaker-coresets} we have that for result $(S,w_S)$ of \Cref{alg:chen} we have
\[ \forall M\subseteq G, \abs{M}\leq K:
\begin{array}{rl}
  \phi(S,w_S,M) &\in [1\pm\teps] \phi(X,\one,M) \\
  \km(S,w_S,M) &\in [1\pm\teps/K^{m-1}]\km(X,\one,M) \\
\end{array} \ ,  \]
with probability at least $1-\delta/3$.
In the following, we assume \Cref{alg:chen} was successful.
Furthermore, we can apply the proof presented in \cite{Chen09} to conclude that $(S,w_S)$ is a strong $\teps/K^{m-1}$-coreset for the $K$-means problem.

\paragraph{Closeness of a Point and Its Representative}

By definition, a point $u\in\cU$ and its representative $g(u)$ are contained in the same grid cell.
Hence, we can bound their distance in terms of their $K$-means cost with respect to $A$.

\begin{lemma}\label{lem:closeness}
  For $u\in\cU$ and $y\in\R^D$ we have
  \begin{align*}
    \norm{u-g(u)} &\leq 2\teps(\min\{d(u,A),d(g(u),A)\} + R) \\
      &\leq 2\teps(\min\{\norm{y-u},\norm{y-g(u)}\} + d(y,A) + R)
  \end{align*}
  and
  \[ \norm{u-g(u)}^2 \leq 12\teps^2 (\min\{\norm{y-u},\norm{y-g(u)}\}^2 + d(y,A)^2 + R^2) \ . \]
\end{lemma}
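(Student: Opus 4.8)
The plan is to establish the three displayed estimates in sequence. The first (the two-line chain bounding $\norm{u-g(u)}$ directly against distances to $A$) carries all the geometric content; the second then follows from the first by a single triangle inequality, and the squared bound follows from the second by squaring and expanding.

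First I would control the discretization error by the diameter of a grid cell. By construction $u$ and its representative $g(u)$ lie in a common cell $G(u)\subseteq\cU_{k,j}$, and this cell is an axis-parallel cube of side length $\teps 2^j R/\sqrt D$. Two points of such a cube differ by at most $\teps 2^j R/\sqrt D$ in each of the $D$ coordinates, so
\[ \norm{u-g(u)} \leq \sqrt D\cdot \teps\frac{2^j R}{\sqrt D} = \teps 2^j R \ . \]
It then remains to replace the scale $2^j R$ by $\min\{d(u,A),d(g(u),A)\}+R$ up to the factor $2$. For $j=0$ this is immediate, since $2^j R = R \leq 2(\min\{d(u,A),d(g(u),A)\}+R)$ as the minimum is nonnegative. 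For $j\geq 1$ I would invoke annulus membership: since $u,g(u)\in\cU_{k,j}=\ball(a_k,2^jR)\setminus\ball(a_k,2^{j-1}R)$, both points satisfy $\norm{\cdot-a_k}\geq 2^{j-1}R$, i.e.\ $2^j R\leq 2\norm{u-a_k}$ and $2^jR\leq 2\norm{g(u)-a_k}$. Because $a_k$ is the mean of $A$ whose annulus the assignment $G$ selects (the nearest mean in the induced $K$-means partition), we have $\norm{u-a_k}=d(u,A)$ and $\norm{g(u)-a_k}=d(g(u),A)$, whence $2^jR\leq 2\min\{d(u,A),d(g(u),A)\}$. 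Combining this with the cell-diameter bound yields the first inequality.

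For the second inequality I would bound each distance to $A$ against $y$. For any $a\in A$ we have $d(u,A)\leq\norm{u-a}\leq\norm{y-u}+\norm{y-a}$; choosing $a$ to realize $d(y,A)$ gives $d(u,A)\leq\norm{y-u}+d(y,A)$, and likewise $d(g(u),A)\leq\norm{y-g(u)}+d(y,A)$. Taking the minimum of these and adding $R$ produces
\[ \min\{d(u,A),d(g(u),A)\}+R \leq \min\{\norm{y-u},\norm{y-g(u)}\}+d(y,A)+R \ , \]
which, multiplied by $2\teps$, turns the first inequality into the second. Finally, squaring the second inequality and applying $(a+b+c)^2\leq 3(a^2+b^2+c^2)$ to the three terms inside the parentheses gives the factor $12\teps^2$ and the claimed squared bound.

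The main obstacle I expect is the step turning $2^{j-1}R$ into a lower bound on $d(u,A)$ (and on $d(g(u),A)$) rather than merely on $\norm{u-a_k}$: annulus membership only yields $\norm{u-a_k}\geq 2^{j-1}R$, and were $G$ free to place $u$ in the annulus of a distant mean while a nearer mean exists, the scale $2^jR$ would not be controlled by $d(u,A)$ and the constant $2$ would be lost. The argument therefore hinges on $a_k$ being (essentially) the nearest mean to both $u$ and its representative $g(u)$. For $g(u)$ one may, if needed, avoid assuming this directly: since $\norm{u-g(u)}\leq\teps 2^jR$ and $\norm{u-a'}\geq 2^{j-1}R$ for every $a'\in A$ (as $a_k$ is nearest to $u$), one gets $d(g(u),A)\geq 2^{j-1}R(1-2\teps)$, and the resulting slack can be absorbed into the constants — which is also why the final squared constant is a comfortable $12$ rather than a tight value.
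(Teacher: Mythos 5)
Your proof follows the paper's argument essentially step for step: the cell-diameter bound $\norm{u-g(u)}\leq\teps 2^jR$, the case split $j=0$ versus $j\geq 1$ with the annulus lower bound $2^{j-1}R$, the triangle inequality to pass from $d(\cdot,A)$ to $\min\{\norm{y-u},\norm{y-g(u)}\}+d(y,A)$, and $(a+b+c)^2\leq 3(a^2+b^2+c^2)$ for the squared version. The one place you go beyond the paper is in justifying (and flagging the fragility of) the step $\min\{d(u,A),d(g(u),A)\}\geq 2^{j-1}R$ via the assignment convention for $G$; the paper simply asserts this inequality, so your treatment is, if anything, the more careful of the two.
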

\begin{proof}
  Let $\cU_{k,j}$ be the cell containing both $u$ and $g(u)$.
  If $j = 0$, then $\norm{u-g(u)} \leq \teps R$.
  If $j \geq 1$, then $\norm{u-g(u)} \leq \teps 2^j R$ and $\min\{d(u,A),d(g(u),A)\} \geq 2^{j-1} R$.
  By simply taking the sum over both inequalities we obtain
  \[ \norm{u-g(u)} \leq \teps 2^j R + \teps R \leq 2\teps (\min\{d(u,A),d(g(u),A)\} + R) \ . \]
  Furthermore, by triangle inequality we have
  \begin{align*}
    \min\{d(u,A),d(g(u),A)\} &\leq \min\{\norm{u-y}+d(y,A),\norm{g(u)-y} + d(y,A)\} \\
      &= \min\{\norm{u-y},\norm{g(u)-y}\} + d(y,A) \ .
  \end{align*}
  We obtain the third inequality by recalling that $\forall a,b,c\in\R: (a+b+c)^2 \leq 3(a^2+b^2+c^2)$.
\end{proof}

\paragraph{Replacing Means by Their Representatives}

Consider some set of means $M\subseteq \cU$ with $\abs{M} \leq K$.
Our next goal is to use the previous lemma to compare the $K$-means cost of $X$ with respect to $M$ and the set $g(M) = \{ g(\mu) \mid \mu\in M\}$.

\begin{lemma}\label{lem:close-k-means}
  \begin{align*}
    d(x,g(M))^2 &\leq d(x,M)^2 + 18\teps(2 d(x,M)^2 + d(x,A)^2 + R^2) \ , \\
    \km(X,\one,g(M)) &\leq \km(X,\one,M) + 18\teps(2\km(X,\one,M) + \km(X,\one,A) + NR^2) \ , \\
    d(x,M)^2 &\leq d(x,g(M)) + 18\teps(2 d(x,g(M))^2 + d(x,A)^2 + R^2) \mbox{ , and} \\
    \km(X,\one,M) &\leq \km(X,\one,g(M)) + 18\teps(2\km(X,\one,g(M)) + \km(X,\one,A) + NR^2) \ .
  \end{align*}
\end{lemma}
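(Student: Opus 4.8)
The plan is to prove the two pointwise bounds (the first and third lines) and then obtain the two $\km$-bounds by summing over $x\in X$, using $\sum_{x\in X}d(x,A)^2 = \km(X,\one,A)$ and $\sum_{x\in X}R^2 = NR^2$. So the real work is the pointwise statement, and everything rests on \Cref{lem:closeness}. For the first inequality, fix $x\in X$ and let $\mu^*\in M$ attain $d(x,M) = \norm{x-\mu^*}$. Since $g(\mu^*)\in g(M)$, the triangle inequality gives
\[ d(x,g(M)) \leq \norm{x-g(\mu^*)} \leq \norm{x-\mu^*} + \norm{\mu^*-g(\mu^*)} = d(x,M) + \norm{\mu^*-g(\mu^*)} \ . \]
I would then square this, writing $(a+b)^2 = a^2 + 2ab + b^2$ with $a = d(x,M)$ and $b = \norm{\mu^*-g(\mu^*)}$, and bound the two error terms $2ab$ and $b^2$ separately by invoking \Cref{lem:closeness} with $u = \mu^*$ (legal since $M\subseteq\cU$) and $y = x$.

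The key step is the cross term. To keep its contribution linear in $\teps$, I would use the linear bound $\norm{\mu^*-g(\mu^*)}\leq 2\teps(d(x,M) + d(x,A) + R)$, giving $2ab \leq 4\teps\, d(x,M)(d(x,M)+d(x,A)+R)$, and then split the mixed products via $d(x,M)d(x,A)\leq \tfrac{1}{2}(d(x,M)^2 + d(x,A)^2)$ and $d(x,M)R\leq\tfrac{1}{2}(d(x,M)^2+R^2)$. For the quadratic term I would use the squared bound $b^2 \leq 12\teps^2(d(x,M)^2 + d(x,A)^2 + R^2)$ and absorb $\teps^2$ into $\teps$ using $\teps\leq 1$. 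Collecting coefficients of $d(x,M)^2$, $d(x,A)^2$, and $R^2$, the total error is bounded by $\teps(20\,d(x,M)^2 + 14\,d(x,A)^2 + 14\,R^2)$, which is at most $18\teps(2\,d(x,M)^2 + d(x,A)^2 + R^2)$; this gives the first inequality, and summing over $X$ gives the second.

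The third and fourth inequalities are the symmetric statement. For the third, fix $x$ and let $\nu^*\in g(M)$ attain $d(x,g(M)) = \norm{x-\nu^*}$; writing $\nu^* = g(\mu)$ for some $\mu\in M$ and using $d(x,M)\leq\norm{x-\mu}\leq\norm{x-g(\mu)} + \norm{\mu-g(\mu)}$, the identical argument (again applying \Cref{lem:closeness} to $u = \mu\in\cU$ and $y = x$, now with $\norm{x-g(\mu)} = d(x,g(M))$) yields the claim, and summing gives the fourth. The only real obstacle is the constant bookkeeping in the cross term: the coefficient $18\teps$ (rather than something of order $\teps^2$) forces one to treat $2ab$ with the \emph{linear} closeness bound and to distribute the AM-GM splits so that the accumulated constants still fit under the target $18\teps(2\,d(x,M)^2 + d(x,A)^2 + R^2)$, while remembering that $\teps\leq 1$ is what lets the $b^2$ term be folded in at the same order.
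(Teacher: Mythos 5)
Your proposal is correct and follows essentially the same route as the paper's proof: pick the minimizing $\mu\in M$, expand $(d(x,M)+\norm{\mu-g(\mu)})^2$, bound the cross term with the linear estimate and the quadratic term with the squared estimate from \Cref{lem:closeness}, absorb $\teps^2$ into $\teps$, and sum over $X$; the reverse direction is handled symmetrically in both. The only difference is cosmetic bookkeeping in the cross term (your pairwise AM-GM splits versus the paper's bound $(a+b+c)^2\leq 3(a^2+b^2+c^2)$), which even yields slightly tighter intermediate constants.
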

\begin{proof}
  Let $x\in X$ and $\mu\in M$ such that $\norm{x-\mu} = d(x,M)$.
  Observe that
  \begin{align*}
    d(x,g(M))^2 - d(x,M)^2 &\leq \norm{x-g(\mu)}^2 - \norm{x-\mu}^2 \\
      &\leq \norm{\mu-g(\mu)}^2 + 2\norm{\mu-g(\mu)}\norm{x-\mu} \ .
  \end{align*}
  By \Cref{lem:closeness} we obtain
  \begin{align*}
    \norm{\mu-g(\mu)}^2 &\leq 12\teps^2 (\min\{\norm{x-\mu},\norm{x-g(\mu)}\}^2 + d(x,A)^2 + R^2) \\
      &\leq 12\teps^2 (d(x,M)^2 + d(x,A)^2 + R^2) \ ,
  \end{align*}
  and
  \begin{align*}
    2\norm{\mu-g(\mu)}\norm{x-\mu} &\leq 4\teps(\min\{\norm{x-\mu},\norm{x-g(\mu)}\} + d(x,A) + R) \norm{x-\mu} \\
      &\leq 2\teps(d(x,M) + d(x,A) + R)^2 + \norm{x-\mu}^2) \\
      &\leq 6\teps(2d(x,M)^2+d(x,A)^2 + R^2) \ .
  \end{align*}
  Taking the sum over all points in $X$ and for each point the sum of the two upper bounds yields the first part of the claim.
  The second part can be achieved analogously.
\end{proof}

We obtain a similar result with respect to the fuzzy $K$-means cost.

\begin{lemma}\label{lem:close-fuzzy-k-means}
  \[ \abs{\phi(X,\one,M) - \phi(X,\one,g(M))} \leq 32\teps (\km(X,\one, M) + \km(X,\one,A) + NR^2) \ . \]
\end{lemma}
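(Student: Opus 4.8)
The plan is to adapt the argument of \Cref{lem:cost-comp}, except that where that lemma perturbs the data points we now perturb the means, replacing each $\mu\in M$ by its representative $g(\mu)$. First I would reduce to a one-sided estimate: assume without loss of generality that $\phi(X,\one,M)\ge\phi(X,\one,g(M))$ (the reverse case is symmetric) and let $\tilde r$ be the optimal membership induced by $g(M)$. Since the optimal membership minimizes the fuzzy cost, evaluating the cost of $M$ against the suboptimal $\tilde r$ can only increase it, whereas evaluating the cost of $g(M)$ against $\tilde r$ is exact, so
\[
  \phi(X,\one,M)-\phi(X,\one,g(M))\le\sum_{x\in X}\sum_{k}\tilde r(x,k)^m\bigl(\norm{x-\mu_k}^2-\norm{x-g(\mu_k)}^2\bigr)\ .
\]

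Next I would expand each squared-distance difference by the triangle inequality as $\norm{x-\mu_k}^2-\norm{x-g(\mu_k)}^2\le\norm{\mu_k-g(\mu_k)}^2+2\norm{\mu_k-g(\mu_k)}\norm{x-g(\mu_k)}$, exactly as in the proof of \Cref{lem:close-k-means}, and then invoke \Cref{lem:closeness} with $u=\mu_k$ and $y=x$, choosing the minimum to be $\norm{x-g(\mu_k)}$. This bounds $\norm{\mu_k-g(\mu_k)}$ and $\norm{\mu_k-g(\mu_k)}^2$ by $\teps$-multiples of $\norm{x-g(\mu_k)}$, $d(x,A)$ and $R$; applying $2ab\le a^2+b^2$ and $(a+b+c)^2\le 3(a^2+b^2+c^2)$ to the cross term then yields a pointwise estimate of the form $O(\teps)(\norm{x-g(\mu_k)}^2+d(x,A)^2+R^2)$ for each summand.

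I would then sum over $k$ and $x$. Because $\tilde r$ is a membership function and $m>1$, we have $\sum_k\tilde r(x,k)^m\le\sum_k\tilde r(x,k)=1$, so the terms carrying $d(x,A)^2$ and $R^2$ sum to at most $\km(X,\one,A)+NR^2$, while $\sum_{x}\sum_k\tilde r(x,k)^m\norm{x-g(\mu_k)}^2$ is exactly $\phi(X,\one,g(M))$. This produces a bound of the form $O(\teps)(\phi(X,\one,g(M))+\km(X,\one,A)+NR^2)$. To reach the claimed form I would replace the fuzzy term by a $K$-means term: the hard assignment is a feasible membership, so $\phi(X,\one,g(M))\le\km(X,\one,g(M))$, and \Cref{lem:close-k-means} bounds $\km(X,\one,g(M))$ by $(1+O(\teps))\km(X,\one,M)$ plus the same lower-order terms. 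The symmetric case, using instead the optimal membership $r$ of $M$ and choosing the minimum in \Cref{lem:closeness} to be $\norm{x-\mu_k}$, is cleaner, since $\sum_x\sum_k r(x,k)^m\norm{x-\mu_k}^2=\phi(X,\one,M)\le\km(X,\one,M)$ already and needs no conversion.

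I expect the main obstacle to be bookkeeping rather than anything conceptual. The perturbed distance $\norm{x-g(\mu_k)}$ depends on $k$, so one must choose consistently which of the two distances in the minimum of \Cref{lem:closeness} to retain, so that the resulting weighted sum is recognizable either as $\phi(X,\one,g(M))$ (hence convertible via \Cref{lem:close-k-means}) or directly as $\phi(X,\one,M)\le\km(X,\one,M)$. Tracking the accumulated constants through the triangle-inequality expansion, the factors coming from \Cref{lem:closeness}, the arithmetic-mean steps on the cross term, and \Cref{lem:close-k-means}, so that they collapse to the stated factor $32$ for $\teps$ sufficiently small, is the only delicate point.
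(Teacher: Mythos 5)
Your proposal is correct and follows essentially the same route as the paper's proof: swap in the suboptimal membership of the other solution, expand the squared distances via the triangle inequality, control $\norm{\mu_k-g(\mu_k)}$ with \Cref{lem:closeness}, handle the cross terms by AM--GM, and convert $g(M)$-terms back to $M$-terms with \Cref{lem:close-k-means}. The only difference is that you split into the two cases globally (and apply \Cref{lem:close-k-means} at the level of $\km(X,\one,g(M))$) whereas the paper does both pointwise per $x\in X$, which is immaterial since optimal memberships are determined pointwise.
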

\begin{proof}
  By triangle inequality we have
  \[ \abs{\phi(X,\one,M) - \phi(X,\one,g(M))} \leq \sum_{x\in X} \abs{\phi(\{x\},\one,M) - \phi(\{x\},\one,g(M))} \ . \]
  Fix some $x\in X$, let $r$ and $r_g$ be optimal membership functions with respect to $M$ and $g(M)$, and denote $\cE = \abs{\phi(\{x\},\one,M) - \phi(\{x\},\one,g(M))}$.

  If $\phi(\{x\},\one,M) \geq \phi(\{x\},\one,g(M))$, then
  \begin{align*}
    \cE &= \sum_{\OneTo{k}{\abs{M}}}r(x,k)^m\norm{x-\mu_k}^2 - \sum_{\OneTo{k}{\abs{M}}}r_g(x,k)^m\norm{x-g(\mu_k)}^2 \\
    &\leq \sum_{\OneTo{k}{\abs{M}}}r_g(x,k)^m (\norm{x-\mu_k}^2 - \norm{x-g(\mu_k)}^2) \\
    &\leq \sum_{\OneTo{k}{\abs{M}}}r_g(x,k)^m (\norm{\mu_k-g(\mu_k)}^2 + 2\norm{\mu_k-g(\mu_k)}\norm{x-g(\mu_k)}) \ .
  \end{align*}
  Otherwise, we obtain analogously
  \[ \cE \leq \sum_{\OneTo{k}{\abs{M}}}r(x,k)^m (\norm{\mu_k-g(\mu_k)}^2 + 2\norm{\mu_k-g(\mu_k)}\norm{x-\mu_k}) \ . \]
  In the following, we obtain the claim as the larger of the upper bound on these two terms.
  For each term, we derive upper bounds of each summand.
  \begin{align*}
    \sum_{\OneTo{k}{\abs{M}}}r_g(x,k)^m \norm{\mu_k-g(\mu_k)}^2 &\leq 12\teps^2 \sum_{\OneTo{k}{\abs{M}}}r_g(x,k)^m (\norm{x-g(\mu_k)}^2+d(x,A)^2 + R^2) \tag{\Cref{lem:closeness}} \\
    &\leq 12\teps^2 \left( \sum_{\OneTo{k}{\abs{M}}}r_g(x,k)^m\norm{x-g(\mu_k)}^2+d(x,A)^2 + R^2\right) \tag{$\sum_{\OneTo{k}{\abs{M}}}r_g(x,k)^m \leq 1$} \\
    &\leq 12\teps^2 (d(x,g(M))^2+d(x,A)^2 + R^2) \\
    &\leq 24\teps^2 (d(x,M)^2+d(x,A)^2 + R^2) \ , \tag{\Cref{lem:close-k-means}}
  \end{align*}
  and similarly
  \begin{align*}
    \sum_{\OneTo{k}{\abs{M}}}r(x,k)^m \norm{\mu_k-g(\mu_k)}^2 &\leq 12\teps^2 \sum_{\OneTo{k}{\abs{M}}}r(x,k)^m (\norm{x-\mu_k}^2+d(x,A)^2 + R^2) \\
    &\leq 12\teps^2 \left( \sum_{\OneTo{k}{\abs{M}}}r(x,k)^m\norm{x-\mu_k}^2+d(x,A)^2 + R^2\right) \\
    &\leq 12\teps^2 (d(x,M)^2+d(x,A)^2 + R^2) \ .
  \end{align*}
  For the mixed terms we observe
  \begin{align*}
      &\sum_{\OneTo{k}{\abs{M}}}r_g(x,k)^m \norm{\mu_k-g(\mu_k)}\norm{x-g(\mu_k)} \\
    \leq 2\teps &\sum_{\OneTo{k}{\abs{M}}}r_g(x,k)^m (\norm{x-g(\mu_k)} + d(x,A) + R) \norm{x-g(\mu_k)} \tag{\Cref{lem:closeness}} \\
    \leq \teps &\sum_{\OneTo{k}{\abs{M}}} \left( r_g(x,k)^m (\norm{x-g(\mu_k)} + d(x,A) + R)^2 + \norm{x-g(\mu_k)}^2 \right) \\
    \leq \teps &\sum_{\OneTo{k}{\abs{M}}} r_g(x,k)^m (4\norm{x-g(\mu_k)}^2 + 3d(x,A)^2 + 3R^2 ) \\
    \leq 4\teps &\left( d(x,A)^2 + R^2 + \sum_{\OneTo{k}{\abs{M}}} r_g(x,k)^m \norm{x-g(\mu_k)}^2 \right) \\
    \leq 8\teps &(d(x,M)^2 + d(x,A)^2 + R^2) \ ,
  \end{align*}
  and once again similarly
  \begin{align*}
    &\sum_{\OneTo{k}{\abs{M}}}r(x,k)^m \norm{\mu_k-g(\mu_k)}\norm{x-\mu_k} \\
    \leq 2\teps &\sum_{\OneTo{k}{\abs{M}}}r(x,k)^m (\norm{x-\mu_k} + d(x,A) + R)\norm{x-\mu_k} \\
    \leq \teps &\sum_{\OneTo{k}{\abs{M}}}\left( r(x,k)^m (\norm{x-\mu_k} + d(x,A) + R)^2 + \norm{x-\mu_k}^2 \right) \\
    \leq 4\teps &\left( d(x,A)^2 + R^2 + \sum_{\OneTo{k}{\abs{M}}} r(x,k)^m \norm{x-\mu_k}^2 \right) \\
    \leq 4\teps &( d(x,M)^2 + d(x,A)^2 + R^2)  \ . \\
  \end{align*}
  Taking both sums we conclude
  \begin{align*}
    \cE &\leq \max\{(24+8)\teps^2 (d(x,M)^2+d(x,A)^2 + R^2), (8+4)\teps^2 (d(x,M)^2+d(x,A)^2 + R^2)\} \\
      &\leq 32\teps^2 (d(x,M)^2+d(x,A)^2 + R^2) \ .
  \end{align*}
\end{proof}

\paragraph{Weak Coreset Proof}

Now we have all the ingredients ready to proof that $(S,w_S,\TnegX)$ is a weak $\epsilon$-coreset.
The following two lemmata distinguish two cases, depending on the location of the means in $M$.

\begin{lemma}[Inside the search space]\label{lem:inside-search-space}
  If $M\subseteq \cU$, then
  \[ \abs{\phi(X,\one,M) - \phi(S,w_S,M)} \leq \epsilon\phi(X,\one,M) \ . \]
\end{lemma}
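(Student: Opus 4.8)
The plan is to use the discretised means $g(M)\subseteq G$ as a bridge between the original solution evaluated on $X$ and the same solution evaluated on the sample $S$. Since $M\subseteq\cU$ and $g$ is defined on all of $\cU$, we have $g(M)\subseteq G$ with $\abs{g(M)}\leq\abs{M}\leq K$, so the coreset guarantee of \Cref{thm:weaker-coresets} applies to $g(M)$. First I would split the quantity of interest by the triangle inequality as
\[ \abs{\phi(X,\one,M) - \phi(S,w_S,M)} \leq T_1 + T_2 + T_3 \]
where $T_1 = \abs{\phi(X,\one,M) - \phi(X,\one,g(M))}$, $T_2 = \abs{\phi(X,\one,g(M)) - \phi(S,w_S,g(M))}$, and $T_3 = \abs{\phi(S,w_S,g(M)) - \phi(S,w_S,M)}$. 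The strategy is then to show that each $T_i$ is of order $\teps\,\alpha K^{m-1}\phi(X,\one,M)$ and to invoke $\teps = \epsilon/(\mathbf{a}\alpha K^{m-1})$ with $\mathbf{a}$ large to finish.

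For $T_1$ I would apply \Cref{lem:close-fuzzy-k-means} directly, which bounds it in terms of $\km(X,\one,M)+\km(X,\one,A)+NR^2$. Each of these three terms is $O(\alpha K^{m-1})\phi(X,\one,M)$: indeed $\km(X,\one,M)\leq K^{m-1}\phi(X,\one,M)$, the bicriteria guarantee gives $\km(X,\one,A)\leq\alpha\,\km^{opt}(X,\one)\leq\alpha K^{m-1}\phi(X,\one,M)$, and $NR^2 = \km(X,\one,A)/\alpha\leq K^{m-1}\phi(X,\one,M)$ by the definition of $R$. This yields $T_1 = O(\teps\alpha K^{m-1})\phi(X,\one,M)$, and as a by-product $\phi(X,\one,g(M))\leq 2\phi(X,\one,M)$ once $\mathbf{a}$ is large. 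For $T_2$ the coreset property of \Cref{thm:weaker-coresets}, applicable since $g(M)\subseteq G$, gives $T_2\leq\teps\,\phi(X,\one,g(M))\leq 2\teps\,\phi(X,\one,M)$.

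The delicate term is $T_3$, because it lives on the weighted set $(S,w_S)$ rather than on $X$. The key observation is that $S\subseteq X\subseteq\cU$, so $g$ is defined on every point of $S$ and the per-point argument in the proof of \Cref{lem:close-fuzzy-k-means} carries over verbatim once each summand is weighted by $w_S(s)$; this produces $T_3\leq 32\teps\bigl(\km(S,w_S,M)+\km(S,w_S,A)+w_S(S)R^2\bigr)$. I would then translate these three quantities back to $\phi(X,\one,M)$: from \Cref{thm:weaker-coresets} we have $w_S(S)=N$, so $w_S(S)R^2 = NR^2$; since $(S,w_S)$ is a strong $\teps/K^{m-1}$-coreset for $K$-means, $\km(S,w_S,M)\leq 2\km(X,\one,M)$; and $\km(S,w_S,A)$ is bounded by $O(\km(X,\one,A))$ using that each $S_{k,j}$ is supported inside the ring $L_{k,j}$ with total weight $\abs{X_{k,j}}$, so no point of $S$ is farther from $A$ than the ring radius allows. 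All three are again $O(\alpha K^{m-1})\phi(X,\one,M)$, giving $T_3 = O(\teps\alpha K^{m-1})\phi(X,\one,M)$.

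Summing the three estimates yields $T_1+T_2+T_3 = O(\teps\alpha K^{m-1})\phi(X,\one,M) = O(\epsilon/\mathbf{a})\,\phi(X,\one,M)$, which is at most $\epsilon\,\phi(X,\one,M)$ for $\mathbf{a}$ chosen large enough. I expect the main obstacle to be $T_3$: one must (i) justify lifting the per-point estimate of \Cref{lem:close-fuzzy-k-means} to the weighted set, which hinges on the containment $S\subseteq\cU$, and (ii) control $\km(S,w_S,A)$, where $A$ carries up to $\beta K > K$ centres, so the strong $K$-means coreset bound does not apply directly and the ring structure of the sampling in \Cref{alg:chen} must be exploited instead.
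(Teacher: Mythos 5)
Your proposal follows essentially the same route as the paper: the identical three-term triangle-inequality split through $g(M)$, with \Cref{lem:close-fuzzy-k-means} bounding $T_1$ and (lifted to the weighted set) $T_3$, the finite-solution-set guarantee of \Cref{thm:weaker-coresets} bounding $T_2$, and the final conversion to $\phi(X,\one,M)$ via $\km(X,\one,M)\leq K^{m-1}\phi(X,\one,M)$, the bicriteria guarantee, $NR^2 = \km(X,\one,A)/\alpha$, and the choice of $\teps$. Your extra care in $T_3$ regarding $\km(S,w_S,A)$ --- observing that $\abs{A}$ may be $\beta K > K$, so the strong $K$-means coreset property does not literally apply to $A$, and instead bounding it directly through the ring structure of the sampling --- is a sound refinement of a step the paper handles by invoking the coreset property without comment.
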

\begin{proof}
  By the triangle inequality we obtain
  \begin{align*}
    &\abs{\phi(X,\one,M) - \phi(S,w_S,M)}  \\
    \leq &\abs{\phi(X,\one,M) - \phi(X,\one,g(M))} + \abs{\phi(X,\one,g(M)) - \phi(S,w_S,g(M))} \\
    + &\abs{\phi(S,w_S,g(M)) - \phi(S,w_S,M)} \ .
  \end{align*}
  Since $M\subseteq \cU$ we can apply \Cref{lem:close-fuzzy-k-means} to the first summand
  \begin{align*}
    \abs{\phi(X,\one,M) - \phi(X,\one,g(M))} &\leq 32\teps (\km(X,\one, M) + \km(X,\one,A) + NR^2) \\
    &\leq 32\teps (\km(X,\one, M) + (1+1/\alpha) \km(X,\one,A)) \tag{By definition of $R$} \\
    &\leq 32(\alpha + 2)\teps \km(X,\one, M) \tag{$A$ is $(\alpha,\beta)$-bicriteria approximation} \\
    &\leq 32(\alpha + 2)K^{m-1}\teps \phi(X,\one, M) \\
    &\leq \epsilon/3 \phi(X,\one,M) \tag{By definition of $\teps$} \ .
  \end{align*}
  Recall that $w_S$ is an integer weight function, such that $\sum_{s\in S}w_S(s) = N$.
  Hence, by replacing each $s\in S$ by $w_S(s)$ copies of the point, we can treat $S$ as a set of size $N$ and with weight function $\one$, and apply \Cref{lem:close-fuzzy-k-means}.
  Further recall, that $(S,w_S)$ is an $\teps/K^{m-1}$-coreset for the $K$-means problem in $X$.
  Thus, we can bound
  \begin{align*}
    \abs{\phi(S,w_S,g(M)) - \phi(S,w_S,M)} \leq &32 \teps (\km(S,w_S, M) + \km(S,w_S,A) + NR^2) \\
    \leq &32 \teps ((1+\teps/K^{m-1})\km(X,\one,M) \\
    &+ (1+\teps/K^{m-1}+1/\alpha) \km(X,\one,A)) \\
    \leq &32 (2\alpha+3) \teps \km(X,\one,M) \\
    \leq &32 (2\alpha+3)K^{m-1} \teps \phi(X,\one,M) \\
    \leq &\epsilon/3 \phi(X,\one,M) \ .
  \end{align*}
  We already assumed that $(S,w_S)$ is an $\teps$-coreset for all solutions from $G$ (and $g(M) \subseteq G$), hence we can directly bound
  \[ \abs{\phi(X,\one,g(M)) - \phi(S,w_S,g(M))} \leq \teps \phi(X,\one,M) \leq \epsilon/3 \phi(X,\one,M) \ . \]
\end{proof}

\begin{lemma}[Outside the search space]\label{lem:outside-search-space}
  If $\exists \mu_k\in M: \mu\not\in \cU$, then
  \[ \abs{\phi(X,\one,M) - \phi(S,w_S,M)} \leq \epsilon\phi(X,\one,M) \ . \]
\end{lemma}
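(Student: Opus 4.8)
The plan is to reduce this case to \Cref{lem:inside-search-space} by relocating the means of $M$ that lie outside $\cU$ into $\cU$ without changing the fuzzy cost of either $X$ or $S$ by more than an $\epsilon$-fraction. The starting point is \Cref{lem:u-coverage}: since every ball $\ball(x,r)$ around a point $x\in X$ lies in $\cU$, any mean $\mu_k\notin\cU$ satisfies $\norm{x-\mu_k}>r$ for all $x\in X$. The optimal membership admits the bound $r(x,k)\le (d(x,M)/\norm{x-\mu_k})^{2/(m-1)}$, so a mean this far receives membership below the threshold $\epsilon/(4mK^2)$ from every point $x$ with small $d(x,M)$, and appreciable membership only from points that are themselves extremely far from all of $M$. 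I would split $X$ according to this dichotomy: on the ``near'' points the outside means are essentially negligible, and the cost is governed by the inside part $M\cap\cU$, which \Cref{lem:inside-search-space} already controls.

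The essential leverage comes from the hypothesis $M\in\TnegX$. Because the cluster of the far mean $\mu_k$ is \emph{not} negligible, there is a witness $x_0$ with $r(x_0,k)>\epsilon/(4mK^2)$; substituting into the membership bound forces $d(x_0,M)>r\,(\epsilon/(4mK^2))^{(m-1)/2}$, so $x_0$ is far from \emph{every} mean. Using $\phi(X,\one,M)\ge d(x_0,M)^2/K^{m-1}$ together with $r^2$ being proportional to $\mathbf{b}\,\km(X,\one,A)/\bigl(\teps^2(\epsilon/(4mK^2))^m\bigr)$ then yields a lower bound of the shape $\phi(X,\one,M)\ge c\,\mathbf{b}\,\teps^{-2}\epsilon^{-1}\km(X,\one,A)$ for an absolute constant $c$. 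Hence, for $\mathbf{b}$ large enough, the additive error terms $\km(X,\one,A)$ and $NR^2=\km(X,\one,A)/\alpha$ appearing in the inside analysis are negligible against $\phi(X,\one,M)$. I would then bound $\abs{\phi(X,\one,M)-\phi(S,w_S,M)}$ by the triangle inequality against a relocated solution $\bar M\subseteq\cU$, invoking \Cref{lem:inside-search-space} for the middle term and the identity that the fraction of a point's fuzzy cost borne by cluster $k$ equals its membership $r(x,k)$ to estimate the two relocation errors. Since $S\subseteq X$, the distance and membership bounds transfer verbatim to $S$, so the $X$-side and $S$-side relocation errors can be estimated symmetrically.

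The hard part will be exactly the points that carry non-negligible membership to the outside means: by the above they must sit at distance $\gtrsim r$ from all of $M$, so they fall outside the grid $G$ and cannot be discretized, yet they account for the dominant part of $\phi(X,\one,M)$. The crux is to show that the chosen relocation perturbs the aggregate cost of these far points by at most an $\epsilon$-fraction, uniformly over both $X$ and $S$, and this is where the large lower bound on $\phi(X,\one,M)$ is spent. A secondary difficulty is the degenerate configuration in which \emph{all} means of $M$ lie outside $\cU$ (still consistent with $M\in\TnegX$, since then memberships are spread roughly evenly and no single cluster is negligible): here there is no inside subset to retreat to, so the means must be genuinely relocated toward the data rather than deleted. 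I would relocate each outside $\mu_k$ onto the bounding sphere of $\cU$ along the ray from its nearest center in $A$, and the delicate step is to verify that this relocation preserves every point's membership vector—and hence its fuzzy cost—closely enough for the error to stay within the $\epsilon$-budget established in the second step.
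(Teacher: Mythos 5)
Your first step is the genuinely correct and essential idea, and it matches the paper's pivot: non-negligibility of the far mean's cluster forces $\phi(X,\one,M)$ to dwarf $\km(X,\one,A)$. (The paper gets a cleaner constant by staying with the witness's membership mass, $\phi(X,\one,M)\geq r(x_0,k)^m\norm{x_0-\mu_k}^2\geq \mathbf{b'}\teps^{-2}\km(X,\one,A)$, the $(\epsilon/(4mK^2))^m$ factors cancelling exactly, whereas your detour through $d(x_0,M)$ and the worst-case bound $\sum_k r(x_0,k)^m\geq K^{1-m}$ leaves a stray factor $K^{3-m}$, so your ``absolute constant $c$'' is wrong for $m>3$ — a repairable slip.) The fatal problem is the reduction to \Cref{lem:inside-search-space} by relocation. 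Moving a far mean onto the boundary of $\cU$ does \emph{not} approximately preserve the fuzzy cost, and no choice of $\mathbf{b}$ repairs this: the fuzzy cost of a single point is strictly increasing in its distance to every mean (this is precisely why far means ``cannot be ignored,'' as the paper stresses), so pulling a mean in from distance $T$ to distance roughly $2^E R$ strictly decreases cost, by an unbounded amount. In your own degenerate configuration — all $K$ means at a common distance $T\gg 2^E R$, memberships roughly uniform, no cluster negligible — one has $\phi(X,\one,M)=\Theta(NT^2K^{1-m})$ before relocation but $\phi(X,\one,\bar M)=\Theta(N(2^E R)^2K^{1-m})$ after, so $\abs{\phi(X,\one,M)-\phi(X,\one,\bar M)}=(1-o(1))\,\phi(X,\one,M)$ as $T\to\infty$; the membership vectors are not approximately preserved either, since the relocated mean's unnormalized membership weight grows by the factor $(T/(2^E R))^{2/(m-1)}$. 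Hence the first and third terms of your triangle inequality are each of order $\phi(X,\one,M)$ itself, and the decomposition collapses.

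What makes the lemma true nonetheless is that these two huge relocation errors nearly cancel — $X$ and $S$ suffer almost the same cost drop — but a term-by-term triangle bound discards exactly this cancellation; to exploit it you must compare $\phi(X,\one,M)$ and $\phi(S,w_S,M)$ directly under the original, far-away solution $M$. That is what the paper does, with no relocation and no appeal to \Cref{lem:inside-search-space}: the ring-wise sampling yields a map $\ts:X\to S$ with $\ts(X_{k,j})=S_{k,j}$ and $\abs{\ts^{-1}(s)}=w_S(s)$, and since matched points lie in the same ring, $\sum_{x\in X}\norm{x-\ts(x)}^2\leq 8(\km(X,\one,A)+NR^2)\leq 16\km(X,\one,A)$. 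Viewing $S$ as $N$ unweighted points via $\ts$ and applying \Cref{lem:cost-comp} with parameter $\teps$ gives $\abs{\phi(X,\one,M)-\phi(S,w_S,M)}\leq (1+1/\teps)\cdot 16\km(X,\one,A)+\teps\,\phi(X,\one,M)$, and the lower bound $\phi(X,\one,M)\geq\mathbf{b'}\teps^{-2}\km(X,\one,A)$ turns the right-hand side into at most $\epsilon\,\phi(X,\one,M)$. So: keep your lower-bound step (in the paper's stronger form), drop the relocation entirely, and replace it with this direct matching argument — this is also the reason the paper proved \Cref{lem:cost-comp} in the first place.
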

\begin{proof}
  Recall, that the points in $S$ are sampled from the rings $\bigcup_{k,j} X_{k,j} = X$.
  Hence, there is a total function $\ts: X\rightarrow X$, such that $\ts(X_{k,j}) = \{ \ts(x) \mid x\in X_{k,j}\} = S_{k,j}$ and for each $s\in S$ the pre-image has size $\abs{\ts^{-1}(s)} = w_S(s)$.
  By definitions of the rings we know that for each $x\in X$
  \begin{align*}
    \sum_{x\in X} \norm{x-\ts(x)}^2 &\leq 8(\km(X,\one,A) + NR^2) \\
      &\leq 16 \km(X,\one,A) \ .
  \end{align*}
  Furthermore, from \Cref{lem:u-coverage}, we obtain
  \[ d(\mu_k, A)^2 >  \mathbf{b}/(4\teps^2(\epsilon/4mK^2)^m)\km(X,\one,A) \ . \]
  Thus, for all $x\in X$ we have
  \begin{align*}
    \norm{x-\mu_k} &\geq d(\mu_k,A) - \norm{x - a_k} \tag{Where $a_k\in A$ is closest to $\mu_k$.} \\
      &\geq \sqrt{\mathbf{b}/(4\teps^2(\epsilon/4mK^2)^m)\km(X,\one,A)} - \sqrt{\km(X,\one,A)} \\
      &\geq \sqrt{\mathbf{b'}/(\teps^2(\epsilon/4mK^2)^m)}\sqrt{\km(X,\one,A)} \ . \tag{$\mathbf{b'}$ still some sufficiently large constant.}
  \end{align*}
  Let $r$ be optimal memberships with respect to $M$ (which induce no negligible clusters), then
  \begin{align*}
    \phi(X,\one,M) &\geq \sum_{x\in X} r(x,k)^m \norm{x-\mu_k}^2 \\
      &\geq \mathbf{b'}/(4\teps^2(\epsilon/4mK^2)^m)\km(X,\one,A) (\sum_{x\in X} r(x,k)^m) \\
      &\geq \mathbf{b'}/(\teps^2)\km(X,\one,A) \ .
  \end{align*}
  Combining these two observations we obtain
  \[ \sum_{x\in X} \norm{x-\ts(x)}^2 \leq 16 \km(X,\one,A) \leq \teps^2\phi(X,\one,M) \ . \]
  Again, using $\ts$ we can treat $S$ as an unweighted set of size $N$ and thus can apply \Cref{lem:cost-comp}
  \begin{align*}
    \abs{\phi(X,\one,M) - \phi(S,w_S,M)} \leq &\left(1+\frac{1}{\teps}\right)\sum_{x\in X} \norm{x-\ts(x)}^2 \\
    &+ \teps\cdot\min\{\phi(X,\one,M),\phi(S,w_S,M) \} \\
      \leq &(1+\frac{1}{\teps})\teps^2\phi(X,\one,M) + \teps \phi(X,\one,M) \leq \epsilon\phi(X,\one,M) \ .
  \end{align*}
\end{proof}

\paragraph{Size and Runtime}

\begin{lemma}
  For the ouput of \Cref{alg:fuzzy-coresets} we have
  \[ \abs{S} \in \cO\left( \log(N)\log(\log(N))^2  \epsilon^{-3}  D K^{4m-1} \log(\delta^{-1})\right) \ . \]
\end{lemma}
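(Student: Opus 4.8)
The plan is to recognise that the returned set is nothing but the union of the per-ring samples, so that its cardinality is governed by the number of rings times the per-ring sample size $q$, and then to insert the closed forms of all parameters fixed in \Cref{alg:fuzzy-coresets} and simplify. Concretely, \Cref{alg:chen} returns $S=\bigcup_{k,j}S_{k,j}$ ranging over $\OneTo{k}{\beta K}$ and $\OneTo{j}{F}_0$; since each inner loop performs exactly $q$ sampling steps and adds at most one new point per step, $\abs{S_{k,j}}\leq q$ for every pair $(k,j)$. As $A$ is an $(\alpha,\beta)$-bicriteria approximation we have $\abs{A}\leq\beta K$, so there are at most $\beta K(F+1)$ nonempty rings and therefore
\[ \abs{S}\leq \beta K\,(F+1)\,q \ . \]
The whole lemma thus reduces to a size accounting for this product.

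First I would bound the number of rings: since $F=\lceil\tfrac12\log(\alpha N)\rceil$ and $\alpha=\cO(1)$, we get $F+1\in\cO(\log N)$, so the ring count is $\cO(K\log N)$. Next I would expand $q$, keeping in mind that \Cref{alg:fuzzy-coresets} calls \Cref{alg:chen} with $\teps$ in the role of $\epsilon$ and $\delta/3$ in the role of $\delta$; thus
\[ q=\mathbf{q}\left(\frac{\alpha K^{m-1}}{\teps}\right)^{2}\ln\!\left(\frac{12\beta K F\gamma^{K}}{\delta}\right) \ . \]
Substituting $\teps=\epsilon/(\mathbf{a}\alpha K^{m-1})$ turns the prefactor into $\cO(K^{4m-4}\epsilon^{-2})$, which already accounts for the bulk of the $K$-dependence and for two of the three powers of $\epsilon^{-1}$.

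The remaining factors come from the logarithm $L=\ln(12\beta K F\gamma^{K}/\delta)$. Writing $\ln(\gamma^{K})=K\ln\gamma$ with $\gamma=\beta K\,(E+1)\,(\mathbf{c}/\teps)^{D}$, the grid factor $(\mathbf{c}/\teps)^{D}$ produces the dimension via $DK\ln(\mathbf{c}/\teps)=\cO(DK\log(K/\epsilon))$, the radius count $E+1\in\cO(\log N)$ produces $\ln(E+1)=\cO(\log\log N)$, and the $\ln(1/\delta)$ term survives as $\log(\delta^{-1})$. Multiplying $\beta K(F+1)\cdot\cO(K^{4m-4}\epsilon^{-2})\cdot L$ and expanding yields a sum of terms such as $DK^{4m-2}\epsilon^{-2}\log N\log(K/\epsilon)$, $K^{4m-2}\epsilon^{-2}\log N\log\log N$, and $K^{4m-3}\epsilon^{-2}\log N\log(\delta^{-1})$. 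Each of these is dominated by the claimed expression once I crudely bound the embedded logarithms, e.g.\ $\log(1/\epsilon)\le 1/\epsilon$ and $\log K\le K$; this is precisely what upgrades $\epsilon^{-2}$ to $\epsilon^{-3}$ and $K^{4m-2}$ to $K^{4m-1}$, while the (loose) retention of the doubly-logarithmic factor gives the $\log(\log(N))^{2}$ term.

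The main obstacle is consistent bookkeeping rather than any single hard inequality. The parameter $\teps=\epsilon/(\mathbf{a}\alpha K^{m-1})$ silently carries a factor $K^{m-1}/\epsilon$ and reappears in three places — the prefactor $(\alpha K^{m-1}/\teps)^{2}$, the grid resolution $(\mathbf{c}/\teps)^{D}$ inside $\gamma$, and the radius exponent $E$ — so the substitution must be propagated through all of them to obtain the final exponents correctly, and $\alpha,\beta,m,\mathbf{a},\mathbf{b},\mathbf{c}$ must be absorbed into the hidden constant throughout. Beyond that, reaching the stated clean form requires only the deliberately loose final estimates noted above; no ingredient beyond the definitions in \Cref{alg:chen} and \Cref{alg:fuzzy-coresets} is needed, since the claim is purely a cardinality bound on the sample $S=\bigcup_{k,j}S_{k,j}$.
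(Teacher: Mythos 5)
Your proposal is correct and follows essentially the same route as the paper's own proof: bound $\abs{S}$ by the number of rings $\beta K(F+1)\in\cO(K\log N)$ times the per-ring sample size $q$, substitute $\teps=\epsilon/(\mathbf{a}\alpha K^{m-1})$ and the definition of $\gamma$ into $q$, and absorb the residual $\log(K/\epsilon)$-type factors via the crude bounds $\log K\leq K$, $\log(1/\epsilon)\leq 1/\epsilon$ to reach the stated form. The only cosmetic difference is that you keep the logarithm $\ln(12\beta KF\gamma^K/\delta)$ as a sum of terms and bound each resulting product separately, whereas the paper first upper-bounds that sum by a product $\cO(K\log(\gamma)\log(\log N)\log(\delta^{-1}))$; both reduce to the same bookkeeping.
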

\begin{proof}
  First, we bound the output of \Cref{alg:chen} in terms of $\teps$ and $\gamma$.
  The output set $S$ is the union of $\beta K \cdot F$ sets of size $q$, where $F\in\cO(\log(N))$.
  For the sample size $q$ we have
  \[ q \in \cO\left(\left(\frac{K^{m-1}}{\teps}\right)^2\log\left(\frac{KF\gamma^K}{\delta}\right)\right) \subseteq \cO\left(K^{2m-2}\teps^{-2}K\log(\gamma)\log(\log(N))\log(\delta^{-1})\right) \ . \]
  By our choice of parameters we have $\teps\in \cO(\epsilon/K^{m-1})$ and
  \begin{align*}
    \log(\gamma) &\in \cO\left(\log(K) + \log(\log(N) + \log(\teps^{-1}) + \log((\epsilon/4mK^2)^{-m}) + D\log(\teps^{-1})\right) \\
    &\subseteq \cO\left(\log(K) \log(\log(N)) D\log(K/\epsilon) \log(\log(K/\epsilon)) \right) \\
  \end{align*}
  Overall we obtain
  \begin{align*}
    \abs{S} &\in \cO\left(\log(N)K^{4m-2}\epsilon^{-2}\log(K) \log(\log(N))^2 D\log(K/\epsilon) \log(\log(K/\epsilon))\log(\delta^{-1}) \right)\\
      &\subseteq \cO\left(\log(N)\log(\log(N))^2 \epsilon^{-3} D K^{4m-1} \log(\delta^{-1}) \right)
  \end{align*}
\end{proof}

\begin{lemma}
  The runtime of \Cref{alg:fuzzy-coresets} is bounded by
  \[ \cO(NDK+ \abs{S}) \ . \]
\end{lemma}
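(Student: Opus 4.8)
The plan is to charge the runtime of \Cref{alg:fuzzy-coresets} step by step and observe that the two purely arithmetic steps, the computation of $\teps$ and of $\gamma$, cost only $\cO(1)$ in the unit-cost model (evaluating the logarithms, powers, and the formula for $q$ never requires enumerating the astronomically large $\gamma$, only plugging it into a closed form). The two substantive contributions are therefore Step~1, the bicriteria approximation, and Step~4, the call to \Cref{alg:chen}. For Step~1 I would simply invoke the runtime guarantee of the algorithm from \cite{aggarwal09}, which outputs an $(\alpha,\beta)$-bicriteria approximation $A$ with $\abs{A}\le\beta K$ in time $\cO(NDK)$.

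Next I would bound \Cref{alg:chen}. Computing the $\abs{A}$-means partition $A_1,\dots,A_{\abs{A}}$ means assigning each of the $N$ points to its nearest center among the at most $\beta K$ candidates; since each distance in $\R^D$ costs $\cO(D)$ and $\beta$ is constant, this is $\cO(NDK)$, and it simultaneously produces $\km(X,\one,A)$ and hence $R$, $F$, and $q$ in $\cO(1)$ further time. Bucketing the points into the rings $X_{k,j}$ then costs $\cO(ND)$, since for each $x\in A_k$ the ring index $j$ is read off directly from the already-computed distance $\norm{x-a_k}$. I would have the bucketing store, for every non-empty $X_{k,j}$, the list of \emph{indices} of its members, so that the subsequent sampling can draw a uniform element in $\cO(1)$ time.

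The remaining cost is the sampling loop, which performs $q$ iterations for each non-empty bin $X_{k,j}$, of which there are at most $\beta K(F+1)$, giving $\cO(\beta K F q)$ sampling operations in total. The point I would stress is that each such operation runs in $\cO(1)$ time: drawing a uniform index is $\cO(1)$ from the precomputed lists; testing whether the sampled point already lies in $S_{k,j}$ is $\cO(1)$ via a hash set keyed by the point's index in $X$; and updating or initializing the weight $w_S$ is a single scalar operation. Because $S\subseteq X$ we represent $S$ by indices, so inserting a newly sampled point never incurs $\cO(D)$ for copying coordinates. Hence the sampling runs in $\cO(\beta K F q)$ time, which matches the bound $\abs{S}=\cO(\beta K F q)$ used in the preceding size lemma, and summing with the $\cO(NDK)$ contributions of Step~1 and of the partition yields the claimed $\cO(NDK+\abs{S})$.

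The main obstacle worth flagging is precisely this final accounting. A naive implementation re-samples $q$ times from each bin regardless of how few distinct points the bin holds, and if one charged $\cO(D)$ per operation (by comparing or copying full coordinate vectors) the bound would inflate by a factor of $D$, while testing membership by a linear scan of $S_{k,j}$ would inflate it by a factor of $q$. The argument therefore hinges on the data-structural observation that every sampling step can be realized in $\cO(1)$ by operating on point indices rather than on vectors in $\R^D$, so that the per-sample cost carries neither the ambient dimension nor the bin size, and the total sampling time is genuinely proportional to the number of samples drawn, i.e.\ to $\abs{S}$.
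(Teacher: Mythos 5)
Your proposal is correct and follows essentially the same decomposition as the paper's own proof: charge Step~1 to the runtime guarantee of the bicriteria algorithm of \cite{aggarwal09}, observe that computing the partition and the ring indices $X_{k,j}$ (via the already-computed distances $\norm{x-a_k}$ and $R$) costs $\cO(NDK)$, and note that the sampling afterwards takes time linear in the number of samples drawn. You merely spell out the data-structural details (index-based representation, $\cO(1)$ membership tests) that the paper leaves implicit, and the only discrepancy is cosmetic: the paper's proof actually cites $\cO(NDK\log(\delta^{-1}))$ for the bicriteria step, a factor the lemma statement itself (but not \Cref{thm:coreset}) suppresses.
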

\begin{proof}
  First, note that the $(\alpha,\beta)$-bicriteria algorithm from \cite{aggarwal09} takes time $\cO(NDK\log(\delta^{-1}))$.

  Second, we analyse the runtime of \Cref{alg:chen}.
  To determine $\OneTo{j}{F}_0$ such that $x\in L_{k,j}$ we only need to compute $\lceil \log(\norm{x-\mu_k}(R))\rceil$.
  Hence, computing all $X_{k,j}$ takes time $\cO(NDK)$.
  Sampling $\abs{S}$ points afterwards takes linear time.
\end{proof}

  \section{Applications}\label{sec:appl}

In the following, we present two applications of our coresets for fuzzy $K$-means.
In general, our coresets can be plugged in before any application of an algorithm that tries to solve fuzzy $K$-means and can handle weighted data sets.
If the applied algorithm's runtime does not depend on the actual weights, then this leads to a significant reduction in runtime.
We show that this yields a faster PTAS for fuzzy $K$-means than the ones presented before \cite{bbb16}.
Furthermore, we argue that our coresets can be maintained in an insertion-only streaming setting.

\subsection{Speeding up Aproximation}\label{subsec:approx}

We start by presenting an improved analysis of a simple sampling-based PTAS for the fuzzy $K$-means problem.
Our analysis exploits that the algorithm can ignore the weights of the data points and still obtain an approximation guarantee of $(1+\epsilon)$ for the weighted problem.
This means, that the algorithms runtime is independent of the weights, and thus can be significantly reduced by applying it to a coreset instead of the original data.
The first ingredient is the following, previosuly presented, soft-to-hard lemma.

\begin{lemma}[\cite{bbb16}]\label{lem:softtohard}
	Let $\epsilon\in(0,1)$, $r:X\times [K] \rightarrow [0,1]$ be a membership function, and let $M = \{\mu_1,\dots,\mu_K\}$ be the corresponding optimal mean vectors.

	If $\forall \OneTo{k}{K}: r(X,w,k) \geq 16K\wmax(X)/\epsilon$, then there exist pairwise disjoint sets $C_1,\ldots,C_K \subseteq X$ such that for all $\OneTo{k}{K}$
	\begin{align*}
		 w(C_k) & \geq \frac{r(X,w,k)}{2} \ , \\
		 \norm{\mu_w(C_k) - \mu_k}^2 & \leq \frac{\epsilon}{r(X,w,k)} \phi_k(X,w,M,r)  \mbox{, and}\\
		 \km(C_k) & \leq 4K \cdot \phi_k(X,w,M,r) \ .
	\end{align*}
\end{lemma}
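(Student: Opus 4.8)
The plan is to build the hard clusters by a randomised rounding of the soft membership function and then to show that all three desired inequalities hold simultaneously with strictly positive probability, so that a suitable deterministic realisation must exist. Recall that since $M$ consists of the optimal means induced by the fixed membership $r$, each $\mu_k$ is the weighted centroid $\mu_k = \frac{1}{r(X,w,k)}\sum_{x\in X} w(x) r(x,k)^m x$. Write $P_k := r(X,w,k) = \sum_{x\in X} w(x) r(x,k)^m$ for brevity. For every point $x$ I would, independently, assign $x$ to cluster $k$ with probability $r(x,k)^m$ and to no cluster with the remaining probability $1-\sum_j r(x,j)^m$, which is non-negative because $r(x,j)^m \le r(x,j)$ and the memberships sum to one. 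Letting $C_k$ be the random set of points assigned to $k$, disjointness is automatic: each point lands in at most one $C_k$.

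Next I would compute the first two moments of the three relevant quantities. For the weight, $\E[w(C_k)] = P_k$ and $\Var[w(C_k)] \le \wmax(X)\, P_k$, using $w(x)^2 r(x,k)^m \le \wmax(X)\, w(x) r(x,k)^m$; Chebyshev together with the hypothesis $P_k \ge 16K\wmax(X)/\epsilon$ then gives $\Pr[w(C_k) < P_k/2] \le \epsilon/(4K)$, which is the weight bound. For the centroid, set $V_k := \sum_{x\in C_k} w(x)(x-\mu_k)$, so that $\mu_w(C_k)-\mu_k = V_k/w(C_k)$. The key point is that $\E[V_k]=0$ (this is exactly where optimality of $\mu_k$ enters) and that, after expanding $\E[\norm{V_k}^2]$ over ordered pairs and cancelling the square of the mean, $\E[\norm{V_k}^2] \le \wmax(X)\,\phi_k(X,w,M,r)$. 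A Markov bound then yields $\norm{V_k}^2 \le \tfrac{\epsilon}{4} P_k\, \phi_k(X,w,M,r)$ except with probability $1/(4K)$, and conditioning on $w(C_k)\ge P_k/2$ converts this into the centroid bound. Finally, since $\mu_w(C_k)$ minimises the weighted sum of squared distances, $\km(C_k) \le \sum_{x\in C_k} w(x)\norm{x-\mu_k}^2 =: T_k$ with $\E[T_k]=\phi_k(X,w,M,r)$, so Markov gives the cost bound $\km(C_k)\le 4K\,\phi_k(X,w,M,r)$ except with probability $1/(4K)$.

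A union bound over the $K$ clusters and the three failure events then bounds the total failure probability by $\epsilon/4 + 1/4 + 1/4 < 1$, so some realisation satisfies all three inequalities at once, and the corresponding sets $C_1,\dots,C_K$ prove the claim. The main obstacle I anticipate is the second-moment estimate for the centroid: one must expand $\E[\norm{V_k}^2]$, observe that the off-diagonal terms reassemble into $\norm{\E[V_k]}^2 = 0$, and control the diagonal by $\wmax(X)\,\phi_k(X,w,M,r)$. Getting this cancellation right, together with threading the size hypothesis $P_k \ge 16K\wmax(X)/\epsilon$ through the weight and centroid estimates so that the three failure probabilities sum to strictly less than one, is the delicate part; everything else reduces to routine first-moment computations and the fact that the weighted centroid minimises intra-cluster $K$-means cost.
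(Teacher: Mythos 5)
The paper you were given contains no proof of this lemma --- it is quoted verbatim from \cite{bbb16} --- so there is no in-paper argument to compare against, and I assess your proposal on its own merits: it is correct and complete. The rounding distribution is valid since $\sum_k r(x,k)^m \le \sum_k r(x,k) = 1$; independence gives $\Var[w(C_k)] \le \wmax(X)\, r(X,w,k)$, so Chebyshev together with the hypothesis $r(X,w,k) \ge 16K\wmax(X)/\epsilon$ yields the weight bound with failure probability $\epsilon/(4K)$. The first-order condition for the optimal means, $\mu_k = \sum_{x\in X} w(x)r(x,k)^m x / r(X,w,k)$, is precisely what makes $\E[V_k]=0$, and your diagonal estimate $\E[\norm{V_k}^2] \le \wmax(X)\,\phi_k(X,w,M,r)$ is right (the off-diagonal terms are $-\sum_x w(x)^2 r(x,k)^{2m}\norm{x-\mu_k}^2 \le 0$); Markov at threshold $4K\wmax(X)\phi_k(X,w,M,r) \le (\epsilon/4)\, r(X,w,k)\,\phi_k(X,w,M,r)$ --- the hypothesis again --- combined with $w(C_k)^2 \ge r(X,w,k)^2/4$ then gives exactly the stated centroid bound, and Markov on $T_k$ with the centroid-optimality of $\mu_w(C_k)$ gives the cost bound. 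The failure probabilities total $\epsilon/4 + 1/4 + 1/4 < 1$, so a good realisation exists. Note that the constants in the statement ($16K/\epsilon$, $1/2$, $\epsilon$, $4K$) fall out of your calculation with essentially no slack, which strongly suggests your randomised rounding is in substance the original argument of \cite{bbb16}.
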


We combine this with a classical concentration bound by Inaba et al.

\begin{lemma}[\cite{inaba94}]\label{lem:inaba}
	Let $P\subset\R^D$, $n\in\N$, $\delta\in(0,1)$, and let $S$ be a set of $n$ points drawn uniformly at random from $P$.
	Then we have
	\[ \Pr\left(\norm{\mu_\one(S)-\mu_\one(P)}^2 \leq \frac{1}{\delta n} \frac{\km(P,\one)}{\abs{P}}\right) \geq 1-\delta \ . \]
\end{lemma}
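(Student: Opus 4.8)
The plan is to reduce the statement to a one-line second-moment computation followed by Markov's inequality. Write $\mu := \mu_\one(P)$ and model the sampling as drawing $n$ independent uniform points $X_1,\dots,X_n$ from $P$ (i.e.\ with replacement), treating $S$ as the resulting multiset so that $\mu_\one(S) = \frac{1}{n}\sum_{\OneTo{i}{n}} X_i$. Each $X_i$ satisfies $\E[X_i] = \mu$, and its individual second moment about $\mu$ is exactly the normalized $K$-means cost of $P$:
\[ \E\left[\norm{X_i - \mu}^2\right] = \frac{1}{\abs{P}} \sum_{x\in P} \norm{x-\mu}^2 = \frac{\km(P,\one)}{\abs{P}} \ . \]

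First I would compute the expected squared deviation of the sample mean. Writing $\mu_\one(S) - \mu = \frac{1}{n}\sum_{\OneTo{i}{n}}(X_i - \mu)$ and expanding the squared norm,
\[ \E\left[\norm{\mu_\one(S) - \mu}^2\right] = \frac{1}{n^2} \sum_{\OneTo{i}{n}} \sum_{\OneTo{j}{n}} \E\left[(X_i - \mu)\cdot(X_j - \mu)\right] \ . \]
The off-diagonal terms with $i \neq j$ vanish, since by independence each factors as $\E[X_i-\mu]\cdot\E[X_j-\mu] = 0$. Only the $n$ diagonal terms survive, each equal to $\km(P,\one)/\abs{P}$, which yields $\E[\norm{\mu_\one(S) - \mu}^2] = \frac{1}{n}\frac{\km(P,\one)}{\abs{P}}$.

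Finally I would apply Markov's inequality to the nonnegative random variable $Y = \norm{\mu_\one(S) - \mu}^2$. Since $\Pr(Y > \E[Y]/\delta) \leq \delta$, taking complements gives $\Pr(Y \leq \frac{1}{\delta n}\frac{\km(P,\one)}{\abs{P}}) \geq 1 - \delta$, which is precisely the claimed bound.

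There is essentially no hard step here; this is a textbook variance-of-the-sample-mean argument. The only point requiring a moment of care is the sampling convention: the clean vanishing of the cross terms used above relies on independence, i.e.\ sampling with replacement. If instead one samples without replacement, the diagonal terms are unchanged while the off-diagonal covariances become slightly negative (the finite-population correction), so $\E[\norm{\mu_\one(S) - \mu}^2]$ can only decrease and the same probabilistic bound holds a fortiori. Hence the inequality is valid under either reading of "drawn uniformly at random".
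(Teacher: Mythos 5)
Your proof is correct, and it is essentially the argument of Inaba et al.\ that the paper invokes: the paper itself states this lemma only as a citation to \cite{inaba94} without reproducing a proof, and the cited proof is exactly your computation that $\E\left[\norm{\mu_\one(S)-\mu_\one(P)}^2\right] = \frac{1}{n}\frac{\km(P,\one)}{\abs{P}}$ (cross terms vanishing by independence) followed by Markov's inequality. Your closing remark on the sampling convention is also sound: without replacement the off-diagonal covariances are negative (finite-population correction), so the bound only improves.
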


\begin{corollary}\label{cor:inaba}
	Let $X\subset\R^D$, $\wfuncN{w}{X}$, $K\in\N$, $\epsilon\in(0,1)$, and let $C_1,\dots,C_K\subseteq X$ be non-empty subsets of $X$.
	There exist $K$ multisets $S_1,\dots,S_K\subseteq X$, such that
	\[ \forall \OneTo{k}{K}: \abs{S_k} = \frac{2}{\epsilon} \mbox{ and } \norm{\mu_\one(S_k)-\mu_w(C_k)}^2 \leq \epsilon \frac{\km(C_k,w)}{w(C_k)} \ . \]
\end{corollary}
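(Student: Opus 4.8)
The plan is to reduce this weighted statement to a single application of Inaba's bound (\Cref{lem:inaba}), which is stated only for unweighted sets. The key device is the standard translation between integer-weighted sets and multisets. For each $\OneTo{k}{K}$, I would form the multiset $P_k$ in which every point $x\in C_k$ is replicated $w(x)$ times. Since the weight function is integer-valued, $P_k$ is well-defined, and one checks directly that $\abs{P_k} = w(C_k)$, that the unweighted centroid satisfies $\mu_\one(P_k) = \mu_w(C_k)$, and that $\km(P_k,\one) = \km(C_k,w)$. In other words, sampling uniformly from $P_k$ is exactly sampling from $C_k$ with probability proportional to the weights, so the weighted and unweighted quantities coincide.

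Next I would apply \Cref{lem:inaba} to $P = P_k$ with sample size $n = 2/\epsilon$ and failure probability $\delta = 1/2$. Substituting the three identities above, the lemma guarantees that a uniform random multiset $S$ of size $n$ drawn from $P_k$ satisfies
\[ \norm{\mu_\one(S) - \mu_w(C_k)}^2 \leq \frac{1}{\delta n}\frac{\km(C_k,w)}{w(C_k)} = \epsilon\frac{\km(C_k,w)}{w(C_k)} \]
with probability at least $1-\delta = 1/2$. In particular this event has strictly positive probability, so at least one multiset $S_k$ realizing the bound must exist. Repeating the argument independently for each $\OneTo{k}{K}$ produces the desired $S_1,\dots,S_K$, each of size $2/\epsilon$ and consisting of points of $X$ (with possible repetition, hence a multiset since the sampled copies originate from $C_k\subseteq X$). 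No union bound is needed, since the corollary asks only for existence, which we obtain separately for each index.

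There is essentially no hard step: the entire content is the weighted-to-unweighted reduction, after which Inaba's lemma plus a first-moment existence argument close the proof. The only point requiring a little care is integrality, namely that $n = 2/\epsilon$ be a natural number and that $\tfrac{1}{\delta n}$ equal $\epsilon$ exactly. If $2/\epsilon$ is not integral, I would instead take $n = \lceil 2/\epsilon\rceil$; since $n \geq 2/\epsilon$ only decreases the right-hand side $\tfrac{1}{\delta n}\tfrac{\km(C_k,w)}{w(C_k)}$, the claimed inequality is preserved, at the cost of stating $\abs{S_k} = \lceil 2/\epsilon\rceil$.
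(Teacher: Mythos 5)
Your proof is correct and is exactly the argument the paper intends: the corollary is stated without an explicit proof as an immediate consequence of \Cref{lem:inaba}, and the canonical derivation is precisely your weighted-to-unweighted reduction (replicating each $x\in C_k$ into $w(x)$ copies so that $\abs{P_k}=w(C_k)$, $\mu_\one(P_k)=\mu_w(C_k)$, and $\km(P_k,\one)=\km(C_k,w)$), followed by Inaba's bound with $\delta=1/2$ and $n=2/\epsilon$, giving $\tfrac{1}{\delta n}=\epsilon$ and existence by the probabilistic method. Your closing remark on the integrality of $2/\epsilon$ is a sensible point of care that the paper glosses over.
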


We can find means of subsets obtained from applying the soft-to-hard lemma to the clusters of an optimal fuzzy $K$-means solution by derandomizing Inaba's sampling technique.

\begin{algorithm}
	\SetEndCharOfAlgoLine{}
	\caption{\textsc{Derandomized Sampling}}\label{alg:ptas-sampling}
	\KwIn{$X\subset\R^D$, $K\in\N$, $\epsilon\in(0,1)$}
	$\cT \gets \{ \mu_\one(S) \mid S\subseteq X, \abs{S} = \frac{64K}{\epsilon} \}$\;
	\tcc{$S$ as multisets -- Points can occur multiple times in each $S$ and are counted with multiplicity.}
	$M \gets \arg\min_{T\subseteq\cT, \abs{T} = K} \{\phi(X,w,T)\}$\;
	\Return{$M$}\;
\end{algorithm}

\begin{theorem}\label{thm:ptas-no-coreset}
	\Cref{alg:ptas-sampling} computes $M\subset\R^D$ with $\abs{M} = K$, such that
	\[ \phi(X,w,M) \leq (1+\epsilon)\phi^{opt}(X,w) \]
	in time
	\[ DN^{\cO(K^2/\epsilon)} \ . \]
\end{theorem}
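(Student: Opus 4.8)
The plan is to establish correctness and runtime separately, with correctness resting on a chain of approximations that connects the true optimal solution to some candidate in the search set $\cT$.

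For correctness, I would start from an optimal fuzzy $K$-means solution $M^*, r^*$ and split into two regimes depending on whether each cluster is ``large'' or ``small''. The key idea is that \Cref{alg:ptas-sampling} searches over all means arising as unweighted centroids $\mu_\one(S)$ of multisets $S\subseteq X$ of size $64K/\epsilon$, so it suffices to show that such candidate means can well-approximate the optimal means. First I would handle the clusters satisfying the size condition $r(X,w,k)\geq 16K\wmax(X)/\epsilon$ of \Cref{lem:softtohard}: for these, the lemma yields disjoint witness sets $C_1,\dots,C_K$ whose weighted centroids $\mu_w(C_k)$ are close to the optimal means $\mu_k$ and whose $K$-means cost is controlled. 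Then \Cref{cor:inaba} (via the derandomization implicit in enumerating all size-$\tfrac{64K}{\epsilon}$ multisets) guarantees that there exist sampled means $\mu_\one(S_k)\in\cT$ close to $\mu_w(C_k)$, hence close to $\mu_k$. Chaining the two distance bounds and converting a per-mean displacement $\norm{\mu_\one(S_k)-\mu_k}^2$ into a cost increase — using that the membership masses $r(X,w,k)$ are the natural weights multiplying $\norm{\,\cdot\,}^2$ in the objective — should give that the cost of this candidate solution exceeds $\phi^{opt}(X,w)$ by at most a $(1+\cO(\epsilon))$ factor.

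The clusters failing the size condition must be dealt with by a separate argument: their total membership mass is so small (below $16K\wmax(X)/\epsilon$) that the cost they contribute, or the cost penalty incurred by replacing their means with nearby sampled means, is negligible relative to $\phi^{opt}(X,w)$. After rescaling $\epsilon$ by a constant factor to absorb the various $\cO(\epsilon)$ slacks, I would conclude that $\min_{T\subseteq\cT,\abs{T}=K}\phi(X,w,T)\leq(1+\epsilon)\phi^{opt}(X,w)$, which is exactly what \Cref{alg:ptas-sampling} returns.

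For the runtime, the candidate set $\cT$ consists of centroids of all size-$\tfrac{64K}{\epsilon}$ multisets from an $N$-point set, of which there are at most $N^{64K/\epsilon}=N^{\cO(K/\epsilon)}$; selecting $K$ of these gives $\abs{\cT}^K=N^{\cO(K^2/\epsilon)}$ candidate solutions $T$, and evaluating $\phi(X,w,T)$ for each costs $\cO(NDK)$ since optimal memberships have a closed form. Multiplying yields $DN^{\cO(K^2/\epsilon)}$ as claimed. \textbf{The main obstacle} I anticipate is the cost-transfer step: bounding the increase in the fuzzy objective caused by perturbing each optimal mean, since — unlike hard $K$-means — every point retains nontrivial membership to every mean, so the displacement of one mean affects the cost contribution of all points. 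The weighting by $r(X,w,k)^m$ is precisely what makes this tractable, and aligning the per-mean bound of \Cref{cor:inaba} (stated for the weighted centroid cost $\km(C_k,w)/w(C_k)$) with the fuzzy cost $\phi_k$ via the third inequality of \Cref{lem:softtohard} will require careful bookkeeping of constants so that the final slack collapses to $(1+\epsilon)$.
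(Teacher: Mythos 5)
Your large-cluster regime and your runtime analysis match the paper's argument, but the small-cluster regime is a genuine gap, and the specific claim you make for it is false. A cluster failing the size condition $r^*(X,w,k) < 16K\wmax(X)/\epsilon$ need not contribute negligible cost, and its mean cannot be cheaply perturbed or dropped: take unweighted data with one point lying far from all others and carrying membership close to $1$ to its own mean $\mu^*_k$. Then $r^*(X,\one,k)\approx 1$, far below the threshold $16K/\epsilon$, yet the optimal cost of that cluster is essentially $0$, and any candidate solution whose means all stay far from that point incurs cost arbitrarily larger than $\phi^{opt}(X,\one)$. Note also that small total mass $r^*(X,w,k)$ is not the pointwise notion of \Cref{def:negligible}, so \Cref{thm:remove-clusters} cannot rescue you here; nothing in your toolkit produces a candidate mean near $\mu^*_k$ for such a cluster, and bounding $\phi(X,w,T)$ by evaluating the candidate means with the memberships $r^*$ requires a nearby candidate for \emph{every} $k$, since every point with nonzero $r^*(x,k)$ pays $w(x)r^*(x,k)^m\norm{x-t_k}^2$.

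The paper avoids the case distinction altogether with a replication trick: it passes to the instance $X_c$ consisting of $c$ copies of each point of $X$, where
\[ c = \left\lceil \frac{\gamma K \wmax(X)}{\epsilon\min_{\OneTo{k}{K}}r^*(X,w,k)}\right\rceil \]
for a large enough constant $\gamma$. Copying multiplies every cluster mass by $c$, so $r^*(X_c,w,k)\geq 64K\wmax(X)/\epsilon$ for all $k$, while leaving $\wmax$, the optimal solution, and all costs (up to the common factor $c$) unchanged; hence \Cref{lem:softtohard} applies to every cluster of $X_c$, not just the large ones. Moreover, the multisets $S_k\subseteq X_c$ of size $64K/\epsilon$ produced by \Cref{cor:inaba} consist of copies of points of $X$, so their centroids still lie in the candidate set $\cT$ enumerated by \Cref{alg:ptas-sampling}. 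With this, the chaining argument you describe for the large-cluster case goes through verbatim for all clusters, which is exactly the paper's proof. To repair your proposal you would either need to adopt this (or a similar) instance blow-up, or find a genuinely new argument for the means of low-mass clusters — the negligibility route does not work.
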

\begin{proof}
	We analyse the result $M$ of \Cref{alg:ptas-sampling}.
	Let $M^*$, $r^*$ be an optimal solution to the fuzzy $K$-means problem on $X$, $w$.
	Let $X_c$ be a modified point set, which contains $c$ copies of every point $x\in X$, where
	\[ c = \left\lceil \frac{\gamma K \wmax(X)}{\epsilon\min_{\OneTo{k}{K}}r^*(X,w,k)}\right\rceil  \ , \]
	for some large enough constant $\gamma$.
	For all sets of means $M$ and all membership functions $r$, we have $\phi(X_c,w,M,r) = c\cdot \phi(X,w,M,r)$.
	Thus, $M^*$ and $r^*$ (where $r^*(y,k) = r^*(x,k)$ for all $\OneTo{k}{K}$ and $x\in X, y\in X_c$ with $x=y$) are also optimal for the modified instance $X_c$.
	Observe, that for all $\OneTo{k}{K}$ we have
	\[ r^*(X_c,w,k) \geq \sum_{x\in X} \frac{\gamma K \wmax(X)}{\epsilon\min_{\OneTo{k}{K}}r^*(X,w,k)}w(x)r^*(x,k)^m \geq \frac{\gamma K \wmax(X)}{\epsilon} \geq \frac{64 K \wmax(X)}{\epsilon} \ . \]
	By applying \Cref{lem:softtohard} with respect to $X_c$, $w$, and $\epsilon/4$ we obtain that there exist disjoint sets $C_1,\dots,C_K\subseteq X_c$ such that for all $\OneTo{k}{K}$ we have
	\begin{align}
		w(C_k) &\geq \frac{r^*(X_c,w,k)}{2} \ , \label{eq:weight} \\
		\norm{\mu_w(C_k)-\mu^*_k}^2 &\leq \frac{\epsilon}{4r^*(X_c,w,k)}\phi_k(X_c,w,M^*,r^*)yellow \mbox{ , and} \label{eq:means} \\
		\km(C_k,w) &\leq 4K \cdot \phi_k(X_c,w,M^*,r^*) \ . \label{eq:cost}
	\end{align}
	Next, we apply \Cref{cor:inaba} to $X_c$, $w$, $K$, $\epsilon/(32K)$, and $C_1,\dots,C_K$.
	We obtain that there exist $S_1,\dots,S_K\subseteq X_c$ such that for all $\OneTo{k}{K}$ we have $\abs{S_k} = 64K/\epsilon$ and
	\begin{align}
		\norm{\mu_\one(S_k)-\mu_w(C_k)}^2 \leq \epsilon/(32K)\km(C_k,w)/w(C_k) \ . \label{eq:inabameans}
	\end{align}
	Since $X_c$ consists of copies of points from $X$, we can conclude that $S_1,\dots,S_K\subseteq X$, if we treat the $S_k$ as multisets, i.e. allow the same point to appear multiple times in the same set.
	Hence, by choice of $M$, as made by \Cref{alg:ptas-sampling}, we have $\phi(X,w,M) \leq \phi(X,w,\{\mu_\one(S_k)\}_{\OneTo{k}{K}})$.
	Plugging all this together we can bound the cost of $M$ as follows
	\begin{align*}
		\phi(X,w,M) &\leq \phi(X,w,\{\mu_\one(S_k)\}_{\OneTo{k}{K}}) = \frac{1}{c}\phi(X_c,w,\{\mu_\one(S_k)\}_{\OneTo{k}{K}}) \\
			&\leq \frac{1}{c}\phi(X_c,w,\{\mu_\one(S_k)\}_{\OneTo{k}{K}},r^*) = \frac{1}{c}\sum_{x\in X_c}\sum_{\OneTo{k}{K}} w(x)r^*(x,k)^m\norm{x-\mu_\one(S_k)}^2 \\
			&\leq \phi(X,w,r^*) + \frac{2}{c}\sum_{x\in X_c}\sum_{\OneTo{k}{K}} w(x) r^*(x,k)^m\norm{\mu^*_k - \mu_w(C_k)}^2\\
			&\phantom{\leq \phi(X,w,r^*)} + \frac{2}{c}\sum_{x\in X_c}\sum_{\OneTo{k}{K}} w(x) r^*(x,k)^m\norm{\mu_w(C_k) - \mu_\one(S_k)}^2 \tag{by $2$-approximate triangle inequality} \\
			&\leq \phi^{opt}(X,w) + \frac{\epsilon}{2c}\sum_{\OneTo{k}{K}} \phi_k(X_c,w,M^*,r^*) \tag{by \eqref{eq:means}} \\
			&\phantom{\leq \phi(X,w,r^*)}+ \frac{\epsilon}{c16K}\sum_{\OneTo{k}{K}} \frac{\km(C_k,w)}{w(C_k)} \sum_{x\in X_c}w(x)r^*(x,k)^m \tag{by \eqref{eq:inabameans}} \\
			&\leq (1+\epsilon/2)\phi^{opt}(X,w) + \frac{\epsilon}{2c}\sum_{\OneTo{k}{K}} \phi_k(X_c,w,M^*,r^*) \tag{by \eqref{eq:weight} and \eqref{eq:cost}} \\
			&= (1+\epsilon) \phi^{opt}(X,w) \ .
	\end{align*}
	Bounding the runtime of \Cref{alg:ptas-sampling} is straightforward.
	We have to evaluate the cost of $\abs{\cT}^K$ different fuzzy $K$-means solution, each evaluation costing $\cO(NDK)$.
	Hence, the total runtime is bounded by
	\[ \cO(NDK\abs{\cT}^K) = \cO(NDK(N^{64K/\epsilon})^K) = DN^{\cO(K^2/\epsilon)} \ . \]
\end{proof}

Recall, that the runtime of \Cref{alg:ptas-sampling} is independent of point weights.
Hence, we obtain a more efficient algorithm by first computing a coreset using \Cref{thm:coreset} and then applying \Cref{alg:ptas-sampling} to this coreset instead of the original data set.
In the following, we formally only state an unweighted version of our result.

\begin{corollary}\label{cor:ptas}
	There exists an algorithm which, given $X\subset\R^D$, $K\in\N$, and $\epsilon\in(0,1)$, computes a set $M\subset\R^D$ with $\abs{M} = K$, such that with constant probability
	\[ \phi(X,\one,M) \leq (1+\epsilon)\phi^{opt}(X,\one) \]
	in time
	\[ \cO(NDK) + \left( \log(N) D \right)^{\cO(K^2/\epsilon\log(K/\epsilon))} \ . \]
\end{corollary}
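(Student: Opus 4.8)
The plan is to compose the two results already at hand: first compress $(X,\one)$ into a small weighted coreset via \Cref{thm:coreset}, and then run the derandomized PTAS of \Cref{thm:ptas-no-coreset} on that coreset instead of on $X$. The decisive point is that the running time in \Cref{thm:ptas-no-coreset} counts only the number of \emph{distinct} input points and is entirely independent of the point weights, so executing \Cref{alg:ptas-sampling} on a weighted coreset of size $\abs{S}$ costs $D\abs{S}^{\cO(K^2/\epsilon)}$ — the total weight $N$ never enters the exponent. First I would call \Cref{thm:coreset} with accuracy $\epsilon_1 := c\epsilon$ for a sufficiently small constant $c$ and a constant $\delta$, obtaining with constant probability an $\epsilon_1$-coreset $(S,w_S)$ with $S\subseteq X$, $\wfuncN{w_S}{S}$, and $\sum_{s\in S}w_S(s)=N$. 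I would then run \Cref{alg:ptas-sampling} on the weighted instance $(S,w_S)$ with accuracy $\epsilon_2 := c\epsilon$, obtaining $M$ with $\abs{M}=K$ and $\phi(S,w_S,M)\leq(1+\epsilon_2)\phi^{opt}(S,w_S)$.

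For correctness I would chain the coreset guarantee \eqref{eq:def:coreset} (which holds for every set of at most $K$ means, hence for the returned $M$ and for an optimal $M^*$ of $(X,\one)$) with the PTAS guarantee. Using $\phi^{opt}(S,w_S)\leq\phi(S,w_S,M^*)\leq(1+\epsilon_1)\phi^{opt}(X,\one)$ for the upper transfer and $\phi(S,w_S,M)\geq(1-\epsilon_1)\phi(X,\one,M)$ for the lower transfer gives
\begin{align*}
  \phi(X,\one,M)\leq\frac{1}{1-\epsilon_1}\phi(S,w_S,M)\leq\frac{1+\epsilon_2}{1-\epsilon_1}\phi^{opt}(S,w_S)\leq\frac{(1+\epsilon_2)(1+\epsilon_1)}{1-\epsilon_1}\phi^{opt}(X,\one)\ .
\end{align*}
Choosing $c$ small enough makes the leading factor at most $1+\epsilon$ for all $\epsilon\in(0,1)$, which is the desired approximation. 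Since the coreset succeeds with constant probability and \Cref{alg:ptas-sampling} is deterministic, the whole procedure succeeds with constant probability.

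For the running time I would combine the additive $\cO(NDK)$ cost of constructing the coreset (the constant $\delta$ absorbs the $\log(\delta^{-1})$ factor) with the PTAS cost $D\abs{S}^{\cO(K^2/\epsilon)}$. By \Cref{thm:coreset} the coreset size is polynomial in $\log N$, $D$, $K$, and $1/\epsilon$ (treating $m$ as a constant), so that $\log\abs{S}\in\cO\left(\log\log N+\log D+\log(K/\epsilon)\right)$. Hence $\log\left(D\abs{S}^{\cO(K^2/\epsilon)}\right)\in\cO\left(\frac{K^2}{\epsilon}\left(\log\log N+\log D+\log(K/\epsilon)\right)\right)$. Using $\log(K/\epsilon)\geq 1$ and $\log\log N+\log D\geq 1$, the stray $\log(K/\epsilon)$ summand in the exponent folds into the $\cO(K^2/\epsilon)$ coefficient, yielding $(\log(N)\,D)^{\cO(K^2/\epsilon\log(K/\epsilon))}$ and thus the claimed bound.

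The only genuinely delicate step is this last one: I must make sure that running \Cref{alg:ptas-sampling} on the coreset really does replace $N$ by $\abs{S}$ in the exponent (weight-independence of the sampling PTAS, so that the total weight contributes only to the per-solution cost evaluation and not to $\abs{\cT}$), and that the $\log(K/\epsilon)$ term in $\log\abs{S}$ is exactly what produces the extra $\log(K/\epsilon)$ factor in the exponent of the final bound. The approximation bookkeeping in the first two paragraphs is routine.
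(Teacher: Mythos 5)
Your proposal is correct and follows essentially the same route as the paper: apply \Cref{thm:coreset} with accuracy $\Theta(\epsilon)$, run \Cref{alg:ptas-sampling} on the weighted coreset (exploiting that its runtime is weight-independent, so $N$ is replaced by $\abs{S}$ in the exponent), chain the coreset and PTAS guarantees, and do the same runtime bookkeeping to absorb the $\log(K/\epsilon)$ factor into the exponent. If anything, your write-up is slightly more careful than the paper's, whose displayed chain only bounds $\phi(S,w_S,M)$ and leaves implicit the lower-bound transfer back to $\phi(X,\one,M)$, a step you carry out explicitly via $\phi(S,w_S,M)\geq(1-\epsilon_1)\phi(X,\one,M)$ together with a sufficiently small constant $c$.
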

\begin{proof}
	Given $X$, $K$, and $\epsilon$, apply \Cref{thm:coreset} (with $\epsilon/3$) to obtain, with constant probability, an $\epsilon/3$-coreset $(S,w_S)$ of $(X,\one)$ and let $M$ be the output of \Cref{alg:ptas-sampling} given $S$, $w_S$, and $\epsilon/3$.
	We obtain
	\begin{align*}
		\phi(S,w_S,M) &\leq (1+\epsilon/3)\phi^{opt}(S,w_S) \leq (1+\epsilon/3)\phi(S,w_S,M_X^*) \\
			&\leq (1+\epsilon/3)^2 \phi^{opt}(X,\one) \leq (1+\epsilon) \phi^{opt}(X,\one) \ ,
	\end{align*}
	where $M_X^*$ is an optimal set of means with respect to $X$.
	The overall runtime is
	\begin{align*}
		\cO(NDK) + D(\abs{S})^{\cO(K^2/\epsilon)} &= \cO(NDK) + (\log(N) D K / \epsilon)^{\cO(K^2/\epsilon)} \\
			&= \cO(NDK) + (\log(N) D)^{\cO(K^2/\epsilon\log(K/\epsilon))} \ .
	\end{align*}
\end{proof}

The algorithm from \Cref{cor:ptas} can also be applied to weighted data sets.
However, its runtime is not independent of these weights.
We argued that the runtime of the PTAS from \Cref{thm:ptas-no-coreset} is independent of any weights, but this is not true for the coreset construction.
Hence, weight functions have the same impact on the runtime, which we discussed in \Cref{sec:coresets} in regard to the coreset construction.

Nonetheless, our algorithm has significant advantages over previously presented $(1+\epsilon)$-approximation algorithms for fuzzy $K$-means.
The runtimes of all algorithms presented in \cite{bbb16} have an exponential dependency on the dimension $D$ or contain a term $N^{\cO(\mathrm{poly}(K,1/\epsilon))}$.
Our result constitutes the first algorithm with a non-exponential dependence on $D$ whose only exponential term is of the form $\log(N)^{\cO(\mathrm{poly}(K,1/\epsilon))}$.

Strictly speaking, applying \Cref{alg:ptas-sampling} directly on $X$ is faster if $D\in\Omega(N)$.
However, in that case we can apply the lemma of Johnson and Lindenstrauss \cite{jl84} to replace $D$ by $\log(N)/\epsilon^2$

\subsection{Streaming Model}\label{subsec:streaming}

We give a brief overview of the method to maintain coresets in a streaming model presented in \cite{Feldman13}.
It is an improved version of the techniques previously used by \cite{Chen09} and \cite{harpeled03}.
The central observation is that the union of coresets of two input data sets is a coreset of the union of the data sets.
Whenever a sufficient (depending on the coreset construction) number of points has arrived in the stream, we compute a coreset of these points.
After two coresets have been computed, we merge them into a larger coreset of all points that have arrived, so far.
Following two of these merge operations, we merge the two larger coresets into one even larger one.
This continues in the fashion of a binary tree.
Since our coresets for fuzzy $K$-means fulfil all requirements to apply this approach, it can also be used to maintain fuzzy $K$-means coresets in the streaming model.

\begin{theorem}
	Given $N$ data points in a stream (one-by-one) and $\epsilon\in(0,1)$ one can maintain, with high probability, an $\epsilon$-coreset for the fuzzy $K$-means problem, of the points seen so far, using $\cO(DK^{4m-1}\cdot \mathrm{polylog} (N/\epsilon))$ memory.
	Arriving data points cause an update with an amortized runtime of $\cO(DK \cdot \mathrm{polylog}(NDK/\epsilon))$.
\end{theorem}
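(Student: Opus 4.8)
The plan is to invoke the merge-and-reduce framework of \cite{Feldman13} as a black box, after verifying that our coreset construction from \Cref{thm:coreset} satisfies its two structural requirements. The first requirement is that the union of an $\epsilon$-coreset of $(X_1,w_1)$ and an $\epsilon$-coreset of $(X_2,w_2)$ is an $\epsilon$-coreset of the combined instance $(X_1\cup X_2, w_1 + w_2)$; this follows immediately from \Cref{def:coreset}, since the fuzzy cost $\phi(\cdot,\cdot,M)$ decomposes additively over disjoint weighted point sets, so summing the two-sided bounds $\phi(S_i,w_{S_i},M)\in[1\pm\epsilon]\phi(X_i,w_i,M)$ preserves the multiplicative guarantee. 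The second requirement is that an $\epsilon'$-coreset of an $\epsilon$-coreset is an $(\epsilon+\epsilon'+\epsilon\epsilon')$-coreset of the original, which again is a direct consequence of composing the two multiplicative bounds. These two closure properties are exactly what the binary-tree merge scheme needs.

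Given closure, I would set up the standard merge-and-reduce tree. Points arriving in the stream are grouped into blocks; a leaf coreset is computed once a block fills, and whenever two coresets of equal ``level'' exist they are merged and re-reduced by reapplying \Cref{alg:fuzzy-coresets}, so the tree has depth $\cO(\log N)$. To keep the final accuracy at $\epsilon$ despite error accumulating along each root-to-leaf path, one runs each reduction at accuracy $\epsilon/\cO(\log N)$, which by the composition bound keeps the total error within a constant factor of $\epsilon$ over the whole depth. The failure probabilities compound similarly, so I would invoke \Cref{thm:coreset} with $\delta$ set polynomially small in $N$ (a $\log N$ factor in $\log(\delta^{-1})$) and union-bound over the $\cO(N)$ reduction operations to retain high probability overall.

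For the memory bound, at any time only one coreset per level is stored, so the total storage is $\cO(\log N)$ times the size of a single coreset. Substituting the size from \Cref{thm:coreset} with the adjusted accuracy $\epsilon/\mathrm{polylog}(N)$ and $\delta^{-1}=\mathrm{poly}(N)$ gives memory $\cO(DK^{4m-1}\cdot\mathrm{polylog}(N/\epsilon))$, since every factor beyond $DK^{4m-1}$ — the $\log N$, the $\log\log N$ terms, the $\epsilon^{-3}$, and the $\log(\delta^{-1})$ — is poly-logarithmic in $N/\epsilon$. For the amortized update time, each point participates in at most $\cO(\log N)$ reductions over its lifetime, and each reduction runs \Cref{alg:fuzzy-coresets}, whose runtime is $\cO(n'DK\log(\delta^{-1}))$ on its $n'$ input points; amortizing the cost of the $\cO(\log N)$-depth reductions a point is involved in against its arrival yields the stated $\cO(DK\cdot\mathrm{polylog}(NDK/\epsilon))$ per point.

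The main obstacle is bookkeeping the error and confidence budgets so that the accumulated multiplicative distortion and the union-bounded failure probability both stay controlled across the $\cO(\log N)$ levels; concretely, one must check that shrinking $\epsilon$ and $\delta$ by the appropriate poly-logarithmic factors genuinely absorbs into the final $\mathrm{polylog}(N/\epsilon)$ bounds rather than blowing up the exponent on $K$ or the dependence on $D$. Because \cite{Feldman13} already packages these accounting steps for any coreset scheme with the two closure properties, the remaining work is essentially to confirm our construction meets the hypotheses and to substitute the size and runtime from \Cref{thm:coreset}; the argument is therefore a verification rather than a new technical development.
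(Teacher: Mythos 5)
Your proposal takes essentially the same approach as the paper: the paper proves this theorem precisely by invoking the merge-and-reduce framework of \cite{Feldman13} as a black box, observing that the union of coresets is a coreset of the union and that the fuzzy $K$-means coresets from \Cref{thm:coreset} satisfy the framework's requirements, so the binary-tree scheme with poly-logarithmically rescaled accuracy and confidence parameters yields the stated memory and amortized update bounds. Your write-up merely makes explicit the error-budgeting, union-bound, and size-substitution bookkeeping that the paper delegates to the cited framework.
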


  \section{Discussion and Outlook}

We proved that a parameter tuned version of Chen's construction yields the first coresets for the fuzzy $K$-means problem.
While there are a plethora of coreset constructions for $K$-means, Chen's construction is the best purely sampling based approach.
More efficient techniques, for example $\epsilon$-nets \cite{harpeled05} or subspace approaches like low-rank approximation \cite{Feldman13}, heavily rely on the partitioning of the input set that a $K$-means solution induces.
So far, we have not found a way to apply these to the, already notoriously hard to analyse, fuzzy $K$-means objective function.
This is because the membership function essentially introduces an unknown weighting on the points.
Hence, when the data set is partitioned or projected into some subspace without respecting this weighting, we introduce a factor $K^{\cO(1)}$ to the cost estimation.
It has proven difficult to control these additional factors.
Partly for these reasons, there is still a large number of open questions regarding fuzzy $K$-means.

In this paper, we almost match the asymptotic runtime of the fastest $(1+\epsilon)$-approximation algorithms for $K$-means.
However, even assuming constant $K$, our algorithms lack practicality due to the large constants hidden in the $\cO$.
Hence, this raises interesting follow-up questions.
Is there an efficient approximation algorithm for fuzzy $K$-means with a constant approximation factor?
What can be done in terms of bicriteria algorithms, i.e. if we are allowed to chose more than $K$ means?
In regard to the complexity of fuzzy $K$-means it is interesting to examine whether one can show that there is no true PTAS (polynomial runtime in $N$, $D$, and $K$) for fuzzy $K$-means, as it was shown for $K$-means \cite{awasthi15}.
Finally, can we relate the hardness of fuzzy $K$-means directly to $K$-means?

  \bibliographystyle{apalike}
  \bibliography{references}

\end{document}